\documentclass[lettersize,journal]{IEEEtran}

\usepackage[utf8]{inputenc}
\usepackage[T1]{fontenc}
\usepackage{microtype}
\usepackage{amsmath, amssymb}

\usepackage{amsthm}
\usepackage{graphicx}
\graphicspath{{figures/}}
\usepackage{booktabs}
\usepackage{xcolor}
\usepackage{colortbl}
\usepackage{enumitem}
\usepackage{multirow}
\usepackage{array}
\usepackage{stfloats}
\usepackage{tikz}
\usetikzlibrary{arrows.meta,positioning,calc,fit,backgrounds}
\usepackage{pgfplots}
\pgfplotsset{compat=1.16}
\usepackage{algorithm}
\usepackage{algpseudocode}
\usepackage{cite}
\usepackage[colorlinks=true,linkcolor=black,citecolor=blue,urlcolor=blue]{hyperref}
\usepackage{cleveref}
\usepackage{orcidlink}   
\crefname{assumption}{Assumption}{Assumptions}
\Crefname{assumption}{Assumption}{Assumptions}
\definecolor{cbblue}{RGB}{31,119,180}
\definecolor{cbred}{RGB}{214,39,40}
\definecolor{cbgreen}{RGB}{44,160,44}
\definecolor{cbgray}{RGB}{120,120,120}

\definecolor{goldc}{HTML}{FFE9A6}
\definecolor{silverc}{HTML}{DEDEDE}
\definecolor{bronzec}{HTML}{EBC9A0}
\newcommand{\rkA}{\cellcolor{goldc}}   
\newcommand{\rkB}{\cellcolor{silverc}} 
\newcommand{\rkC}{\cellcolor{bronzec}} 
\newcommand{\medallegend}{\colorbox{goldc}{1st}\,/\,\colorbox{silverc}{2nd}\,/\,\colorbox{bronzec}{3rd} per split (ties share a rank)}

\newtheorem{theorem}{Theorem}
\newtheorem{proposition}{Proposition}
\newtheorem{corollary}{Corollary}
\newtheorem{assumption}{Assumption}
\newtheorem{remark}{Remark}

\newcommand{\takeaway}[1]{\par\smallskip\noindent{\small\itshape
  Take-away: #1\par}}
\newcommand{\keybox}[1]{\begin{center}\fbox{\parbox{0.955\linewidth}{\small #1}}\end{center}}

\title{CW-BASS v2: Saturation-Aware Pseudo-Label Selection for\\
       Semi-Supervised Segmentation under Foundation-Model Teachers}

\author{Ebenezer~Tarubinga~\orcidlink{0009-0004-7340-1873}%
\thanks{E.~Tarubinga is with the AI Research Department, Ebenworks Systems,
Seoul, Korea.\protect\\ E-mail: research@ebenworks.co\protect\\
Corresponding author.}}

\begin{document}

\maketitle

\begin{abstract}
Semi-supervised semantic segmentation has long turned on one question, which
pseudo-labels to trust, and a generation of selection rules, dynamic thresholds,
per-class curricula, soft confidence weights, answered it for the noisy,
under-confident ResNet teachers of their day. Self-supervised foundation encoders
change the regime: with a DINOv2 teacher, confidence \emph{saturates}, so the
filtering that helped a weak teacher can hurt a strong one. We propose
\textbf{CW-BASS~v2}, a \emph{saturation-aware} pseudo-label selection method that
reads the teacher's confidence regime rather than committing to one rule. It pairs
held-out calibration, an unbiased per-class noise estimate, with a self-adaptive
confidence floor that provably bounds retention away from $1$, and combines them in
a one-pass \emph{gate}: measure the reliability of the teacher's confident set,
$\pi_{\mathrm{kept}}{=}\Pr[\text{correct}\mid c{\ge}\tau]$, on a held-out slice, and
filter strictly when it meets the confidence demanded ($\pi_{\mathrm{kept}}{\ge}\tau$),
falling back to the adaptive floor otherwise. The boundary is the pre-existing
operating threshold, not a value tuned to mIoU, and across six DINOv2 teachers it
makes the correct strict-vs-floor call \emph{blind}. CW-BASS~v2 thus recovers the
UniMatch~V2 operating point on the saturated benchmarks by selecting strict
(Pascal~VOC~1/8 $87.4$ against its reported $87.9$; Cityscapes within $0.5$), and
improves on it where the confident set is \emph{unreliable}
($\pi_{\mathrm{kept}}{\approx}89\%$, ADE20K), where the floor edges ahead ($+1.5$,
single seed). The gate is principled because the failure it avoids is measured, not
assumed: on a reliable, saturated teacher the confidence distribution's dynamic
range collapses ($98\%$ of Pascal pixels ${\ge}0.95$), so an adaptive cutoff floods
the retention mask and self-training decays into confirmation bias.

\end{abstract}

\begin{IEEEkeywords}
Semi-supervised learning, semantic segmentation, pseudo-labeling, confidence
thresholding, adaptive thresholds, foundation models, model calibration,
regime-adaptive selection, confidence saturation.
\end{IEEEkeywords}

\section{Introduction}

\begin{figure}[!htb]
\centering
\includegraphics[width=0.9\linewidth]{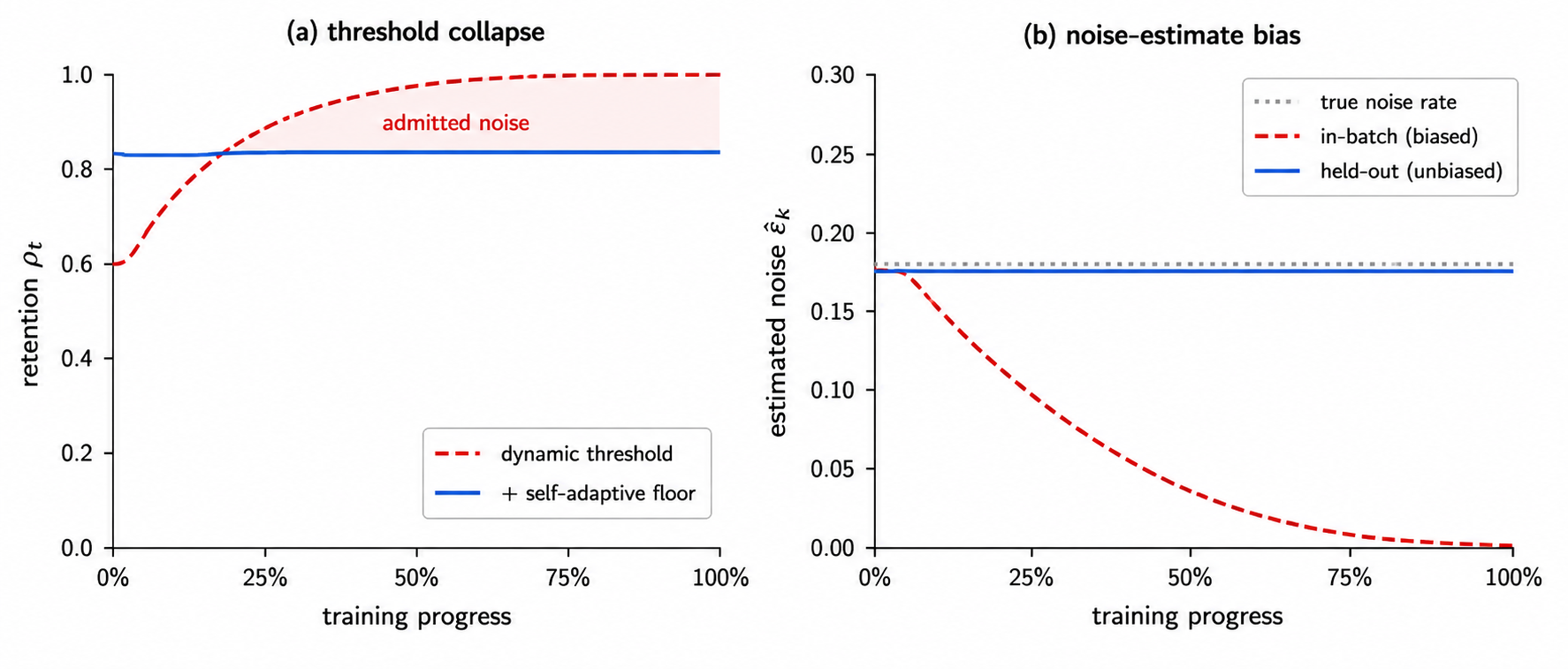}
\caption{\textbf{The mechanism behind CW-BASS~v2's saturation gate.}
\emph{Both panels are schematic}, drawn to show the shape of each effect, not its
measured magnitude; the measured versions are \Cref{fig:maskratio}~(a) and
\Cref{tab:gate}~(b). \textbf{(a)} Under a confident DINOv2 teacher the dynamic
threshold sits near its lower clamp, so retention $\rho_t$ drifts to~$1$ and
training floods with noise (Cor.~\ref{cor:collapse}); the self-adaptive floor pins
$\rho_t$ to a fixed quantile (Thm.~\ref{thm:floor}) --- in the real run it climbs
to that quantile over the first epochs rather than starting there --- but still
does not match a strict threshold on a saturated teacher. \textbf{(b)} An
in-batch noise estimator measures error on pixels the student trains on,
collapsing to~$0$ regardless of quality; our held-out calibration is unbiased
(Prop.~\ref{prop:unbiased}) and breaks that loop. We measure this gap directly:
on the ADE20K teacher, confident-set reliability reads $98.4\%$ in-batch against
$89.3\%$ held out (Sec.~\ref{sec:calibration}).}
\label{fig:teaser}
\end{figure}

\IEEEPARstart{D}{ense} pixel annotation is among the most expensive forms of
supervision in
computer vision: labeling a single Cityscapes image takes roughly
$90$~minutes~\cite{cps}. Semi-supervised semantic segmentation (SSSS) attacks
this cost by learning from a small labeled set together with a large pool of
unlabeled images. The dominant paradigm is \emph{self-training with
consistency regularization}: a teacher assigns pseudo-labels to weakly
augmented unlabeled images, and a student is trained to reproduce them under
strong perturbations~\cite{fixmatch,unimatch}. The central difficulty is that
pseudo-labels are noisy, and a model trained on its own mistakes reinforces
them: confirmation bias~\cite{lee2013pseudo}. Essentially every SSSS method
can be read as a particular answer to one question: \emph{which pseudo-labels
should we trust, and how much?}

\paragraph{The ResNet-era consensus}
For most of the field's history the answer was studied under ResNet-50/101
backbones with DeepLabV3+~\cite{deeplabv3plus} decoders, where teacher
pseudo-labels are noisy enough that the \emph{noise} term dominates the
noise--coverage tradeoff. Methods improved by being smart about
\emph{selection}: confidence thresholding~\cite{fixmatch}, curriculum and
per-class thresholds~\cite{flexmatch,freematch}, soft confidence
weighting~\cite{softmatch}, learning from unreliable
pixels~\cite{u2pl}, and re-distributing biased pseudo-label
priors~\cite{dars,ael}. CW-BASS~\cite{cwbass} belongs to this lineage. It
contributed two mechanisms: (i) a \emph{confidence-weighted} cross-entropy
that scales each pixel's loss by a power of its teacher confidence, softly
down-weighting unreliable pseudo-labels rather than hard-thresholding them;
and (ii) a Sobel-derived \emph{boundary-aware} auxiliary that concentrates
supervision on object edges, where segmentation models are most
uncertain. A per-batch \emph{dynamic threshold} adapted the retention cutoff
to the teacher's mean confidence. On Pascal~VOC and Cityscapes with
ResNet-50, the combination was competitive with the methods of its day.

\paragraph{The foundation-model shift}
Two developments have since changed the regime in which all of these
mechanisms operate. First, self-supervised foundation encoders (DINOv2 in
particular~\cite{dinov2}) produce dense features on which a lightly
fine-tuned decoder~\cite{dpt} reaches pseudo-label accuracy that ResNet
backbones never approached. Second, UniMatch~V2~\cite{unimatchv2} established
that, on top of such a backbone, the strongest results come from
\emph{stricter} filtering (a fixed high threshold $\tau{=}0.95$), two
independent strong views, and complementary channel dropout, not from the
looser, coverage-seeking filtering that the noise-dominated ResNet regime
rewarded. Together they have moved Pascal~VOC 1/8 from the
mid-$70$s~mIoU of the ResNet era to the high-$80$s. This is not a
free lunch handed to every old method: it is a \emph{change of operating
point} that re-prioritises which failure modes matter. A mechanism designed
to squeeze coverage out of a noisy teacher can become inert, or actively
harmful, when the teacher is already accurate.

\paragraph{This paper}
We present CW-BASS~v2, a \emph{saturation-aware} pseudo-label selection method
that carries CW-BASS into the foundation-model regime. Its premise is that no
single threshold rule is right across confidence regimes, the same adaptive
filtering that helps an under-confident teacher hurts a saturated one, so
CW-BASS~v2 does not commit to one rule. It reads the teacher's confidence regime
with a one-pass diagnostic and deploys whichever rule the regime calls for, strict
filtering when the teacher saturates, adaptive filtering when it does not. As a
deployed system it reproduces the UniMatch~V2 operating point on the saturated
benchmarks and \emph{improves} on it where the teacher's confident set is
unreliable (Sec.~\ref{sec:generality}). What makes this design principled rather than a hedge
is a mechanism we measure at every link and that decides the gate: we surface two
failure modes of confidence-adaptive selection, \emph{latent} at ResNet strength
and \emph{dominant} at foundation-model strength (\Cref{fig:teaser}), neither of
which the original CW-BASS could have observed. We report them candidly, including
that CW-BASS~v2's own self-adaptive floor, run \emph{unconditionally} on a
saturated teacher, is the \emph{worst} variant we test ($82.32$ vs.\ strict's
$87.40$), which is exactly why the method does not run it there.

\begin{enumerate}[leftmargin=1.4em,itemsep=2pt]
  \item \textbf{The overconfidence of in-batch noise estimation.} Any method
  that estimates pseudo-label noise from the labeled training pixels (as
  feedback-driven per-class thresholds~\cite{encore} and our calibration-fed
  adaptive scheme do) measures error on data the student has been trained to
  fit. The estimate is biased toward zero and grows more so as training
  proceeds, dragging any noise-driven threshold downward independently of the
  true pseudo-label quality. We make this precise
  (Proposition~\ref{prop:unbiased}) and remove the bias with a held-out
  calibration slice.

  \item \textbf{The collapse of the dynamic threshold.} Under DINOv2 the
  teacher's confidence saturates, and we show that the CW-BASS dynamic
  threshold is upper-bounded by a small constant in this regime. Once nearly
  all confident pixels exceed it, retention drifts to~$1$ and training floods
  with whatever residual noise remains. This collapse manifests only late in
  training and at high teacher confidence; the original ResNet evaluation
  never ran in that regime.
\end{enumerate}

Our contributions are a method and the analysis that makes it principled. We do
not claim a new peak accuracy, UniMatch~V2's strict recipe holds that on the
saturated benchmarks, and we reproduce it, but a selection method that is never
worse than that recipe and \emph{better} where adaptive filtering belongs, together
with the mechanism that tells the two regimes apart:

\begin{enumerate}[leftmargin=1.4em,itemsep=2pt]
  \item \textbf{A saturation-aware selection method with an explicit gate.}
  CW-BASS~v2 measures one statistic on a held-out slice, the reliability of the
  teacher's confident set, $\pi_{\mathrm{kept}}{=}\Pr[\text{correct}\mid
  c{\ge}\tau]$, and deploys strict filtering when it is at least as accurate as the
  confidence demanded ($\pi_{\mathrm{kept}}{\ge}\tau$), its adaptive floor
  otherwise (Sec.~\ref{sec:gate}). The boundary is the operating threshold, not a
  value tuned on mIoU, and across six DINOv2 teachers it makes the correct
  strict-vs-floor call \emph{blind} (\Cref{fig:gate}). The deployed method is therefore never worse
  than the strict state of the art: it selects strict on Pascal~VOC ($87.4$,
  matching UniMatch~V2) and Cityscapes (a near-tie), and improves on it on the one
  teacher whose confident set is unreliable ($\pi_{\mathrm{kept}}{\approx}89\%$,
  ADE20K), where the floor reaches $50.6$~mIoU ($+1.5$ over strict; single seed).

  \item \textbf{An unbiased noise diagnostic: held-out calibration.} We split the
  labeled set into a training slice and a small calibration slice ($\alpha{=}5\%$),
  estimate per-class pseudo-label noise only on the held-out slice, and prove the
  estimator is unbiased where the in-batch estimator (used implicitly by
  feedback-driven per-class thresholds~\cite{encore}) is downward-biased
  (Proposition~\ref{prop:unbiased}). This breaks the feedback loop that drags
  noise-driven thresholds down regardless of true quality, and gives the method an
  honest read of the teacher's regime rather than a coverage-chasing one; its
  prediction, that the estimated noise $\widehat\varepsilon_k$ should \emph{not}
  fall for the classes whose thresholds the adaptive rule lowers, is borne out by
  the per-class IoU evidence (\Cref{tab:tail}).

  \item \textbf{A stability-guaranteed confidence floor.} We introduce a floor
  that scales with the teacher's mean confidence and prove
  (Theorem~\ref{thm:floor}, Corollary~\ref{cor:collapse}) that it pins retention
  to a fixed quantile bounded away from~$1$, whereas the bare dynamic threshold
  provably collapses to full retention. The floor is the adaptive rule the gate
  engages under an \emph{unreliable} teacher (where it is \emph{best}, ADE20K); run
  \emph{unconditionally} on a reliable, saturated teacher it stabilises but does not
  recover accuracy (Pascal, $82.32$ vs.\ strict $87.40$), which is exactly what
  makes the gate necessary rather than optional.

  \item \textbf{The mechanism that decides the gate.} We name the failure chain
  and measure each link on the saturated Pascal teacher: \emph{confidence
  saturation} (the DINOv2 teacher's confidence piles up near~$1$) causes
  \emph{dynamic-range collapse} (the adaptive cutoff degenerates toward a constant,
  pinned in $[0.300,0.331]$), which causes \emph{mask flooding} (retention
  saturates near~$1$), which yields the \emph{early-peak-then-decline} signature as
  confirmation bias~\cite{arazo2020pseudo} and feature
  distortion~\cite{kumar2022finetuning} take over. This explains \emph{why} strict
  wins wherever the confident set is reliable, Pascal and Cityscapes at every
  DINOv2 scale, and, by measuring the six teachers' confidence geometry directly
  (\Cref{tab:gate}), corrects a tempting misreading: the teacher on which adaptive
  wins (ADE20K) is not under-confident but confidently \emph{unreliable}
  ($\pi_{\mathrm{kept}}{\approx}89\%$), the very property the gate reads. This
  doubles as a controlled, batch-matched analysis of adaptive pseudo-label
  thresholding on foundation-model backbones, the first we are aware of.
\end{enumerate}

The mechanism, the matched-batch control, and the trajectory-centred reporting of
Sec.~\ref{sec:negative} also distil into four cheap checks
(Sec.~\ref{sec:discussion}) any practitioner can run before choosing a threshold
rule; the first, the one-pass reliability measurement, is the signal CW-BASS~v2
gates on. A generation of selection mechanisms was designed for weak,
under-confident teachers, a regime foundation backbones leave behind on the
standard benchmarks but not universally; CW-BASS~v2's job is to read, from the
teacher's own calibration, which regime it faces and act accordingly: strict when
the confident set is trustworthy (the common foundation-model case), the adaptive
floor when it is not (Sec.~\ref{sec:generality}). The scientific counterpart is
simple: measure the mechanism before trusting the intuition.

\section{Related Work}
\label{sec:related}

\paragraph{Consistency regularization and self-training for SSSS}
Modern SSSS descends from two threads: consistency regularization, which
enforces stable predictions under perturbation~\cite{meanteacher,cps,psmt},
and self-training, which retrains on the model's own
pseudo-labels~\cite{lee2013pseudo,stplusplus}. FixMatch~\cite{fixmatch}
unified them with weak-to-strong consistency: confident pseudo-labels from a
weak view supervise a strong view. UniMatch~\cite{unimatch} ported this to
segmentation and added feature-space perturbation; AugSeg~\cite{augseg} and
AEL~\cite{ael} refined the augmentation and sampling.
AllSpark~\cite{allspark} reconstructs labeled features from unlabeled ones in
a transformer, and CorrMatch~\cite{corrMatch} propagates labels via
correlation matching. UniMatch~V2~\cite{unimatchv2} is the current state of
the art: a DINOv2 backbone with a strict fixed threshold, dual strong views,
and complementary dropout. CW-BASS~\cite{cwbass} sits in this family, adding
confidence weighting and a boundary auxiliary; \emph{we keep its
weak-to-strong, EMA-teacher, dual-perturbation scaffold and re-examine its
selection mechanisms under the UniMatch~V2 backbone.}

\paragraph{Pseudo-label selection and adaptive thresholds}
Confidence thresholding is the canonical selection rule~\cite{fixmatch}, but a
fixed global threshold trades coverage against noise poorly when classes
differ in difficulty. FlexMatch~\cite{flexmatch} introduced
curriculum pseudo-labeling with per-class thresholds derived from learning
status; FreeMatch~\cite{freematch} replaced the schedule with a
self-adaptive global threshold modulated per class by EMA confidence
statistics; SoftMatch~\cite{softmatch} dropped hard thresholds for a
truncated-Gaussian soft weight; DASH~\cite{dash} grew the threshold
dynamically during training. In segmentation,
CAFS~\cite{cafs} and ENCORE~\cite{encore} adapt per-class thresholds: CAFS from
a held-out labeled calibration, ENCORE from a training-feedback signal.
FARCLUSS~\cite{farcluss} combines several of these levers at once, replacing hard
pseudo-labels with soft top-$K$ class distributions and weighting each pixel by
an entropy-derived reliability score alongside an adaptive class rebalancing
term. Every one of these rules reads the same signal our analysis shows
degenerates under a saturated teacher, an entropy or confidence \emph{spread}
across pixels and classes.
\emph{Our per-class scheme (Sec.~\ref{sec:perclass}) is a faithful instantiation
of precisely this direction}: it sets per-class cutoffs from held-out
calibrated noise estimates, which is the CAFS mechanism, and the in-batch versus
held-out contrast we formalise (Proposition~\ref{prop:unbiased}) is the ENCORE
training-feedback signal made unbiased, so the audit tests these published
methods' \emph{mechanism}, not a strawman. We did not re-run their released code
and flag this as a limitation (Sec.~\ref{sec:limitations}). Our self-adaptive
floor is closest in spirit to FreeMatch's confidence-tracking threshold, but plays
a different role: rather than \emph{being} the threshold, it \emph{lower-bounds}
any dynamic threshold, and we prove this prevents the retention collapse that
FreeMatch-style rules do not guard against in the high-confidence regime.
\emph{Our analysis (Sec.~\ref{sec:negative}) shows that the per-class direction
these methods pursue does not transfer to reliable, saturated foundation-model
teachers, though it retains value where the confident set is unreliable, which is
what CW-BASS~v2 gates on.}

Audits exist in adjacent territory: Oliver \emph{et
al.}~\cite{oliver2018realistic} audited the
evaluation practice of SSL classification, and
Landgraf \emph{et al.}~\cite{landgraf2025rethinking} recently audited
UniMatch~V2's reliability and
robustness. Neither isolates the threshold rule as the single varied factor;
to our knowledge, ours is the first batch-matched, mechanism-level audit of
adaptive pseudo-label thresholding on a foundation-model backbone.

\paragraph{Foundation backbones for dense prediction}
DINOv2~\cite{dinov2} learns dense visual features without labels; a frozen or
lightly fine-tuned DINOv2 encoder with a DPT-style decoder~\cite{dpt} is a
strong dense predictor, and vision-language guidance (SemiVL~\cite{semivl})
pushes SSSS further. The shift to such backbones is the premise of our audit:
the noise statistics that ResNet-era selection mechanisms were tuned for no
longer hold, so mechanisms must be re-derived rather than ported.

\paragraph{Calibration and learning with noisy labels}
Estimating and correcting label noise is classical~\cite{natarajan2013noisy};
confident learning~\cite{confidentlearning} estimates noise transition
matrices from held-out predictions, and our held-out calibration is a
segmentation-specific, online instance of the same de-biasing principle. The
closest segmentation precedent is CAFS~\cite{cafs}, which also calibrates
per-class statistics on a held-out slice of the labeled set; the distinction is
one of \emph{purpose}. CAFS uses that calibration to \emph{set} per-class
adaptive thresholds (precisely the coverage-seeking direction our audit finds
counter-productive once the teacher saturates, Sec.~\ref{sec:negative}), whereas we
repurpose held-out calibration as an \emph{unbiased diagnostic}
(Proposition~\ref{prop:unbiased}) whose role is to explain \emph{why} that
direction fails, not to chase coverage.
Probability calibration~\cite{guo2017cal,betacal,dirichletcal} and
long-tailed calibration~\cite{zhong2021cal} motivate why raw softmax
confidence is an unreliable noise proxy; we use confidence only after held-out
re-estimation. Our treatment of held-out per-class error rates as an unbiased
control signal (rather than a bounded guarantee) is in the spirit of the
noisy-label and self-training theory of Wei \emph{et al.}~\cite{wei2021theoretical}.

\paragraph{Class imbalance}
Long-tailed losses~\cite{ldam,seesaw} and class-rebalancing self-training
(CReST~\cite{crest}, imbalanced SSL~\cite{guo2022class}) motivate the
rarity-scaled coverage penalty we test in the per-class scheme. Our negative
result indicates that, at foundation-model strength, rebalancing the
\emph{threshold} is the wrong lever, even if rebalancing the \emph{loss}
remains useful.

\paragraph{The regime shift, quantified}
The premise of our analysis is a change of regime. The jump from ResNet/SegFormer
methods in the high-$70$s to DINOv2-based UniMatch~V2 at $87.9$ on Pascal~VOC 1/8
(the full SSSS landscape is in \Cref{tab:landscape}, Sec.~\ref{sec:sota}) is
driven primarily by the backbone, not by selection cleverness. A
threshold-side mechanism tuned for the noise statistics of the upper rows need
not transfer to the bottom row, which is exactly what we find. The accuracy
jump is only half the story; the other half is a change in the teacher's
\emph{confidence statistics}. \Cref{fig:calibration} contrasts the two regimes:
a ResNet-50 teacher spreads its pixel confidence across $[0.3,1]$, giving a
threshold a wide operating range, whereas the DINOv2 teacher collapses almost
all of its mass into $[0.95,1]$ ($98\%$ of pixels exceed $0.95$, versus $53\%$
for the ResNet). This is \emph{not} an aggregate-miscalibration artefact: the
DINOv2 teacher has the lower expected calibration error of the two
($0.007$ vs.\ $0.024$), precisely because where it is confident it is
usually right. The problem is the opposite: \emph{confidence saturation},
and with it \emph{dynamic-range collapse}: the distribution has so little
spread that a confidence-adaptive threshold has almost nothing left to read.
When the signal a rule depends on degenerates to a spike at~$1$, the rule
cannot discriminate, however well-calibrated that spike is.

\begin{figure}[!htb]
\centering
\IfFileExists{figures/calibration.pdf}{%
  \includegraphics[width=0.9\linewidth]{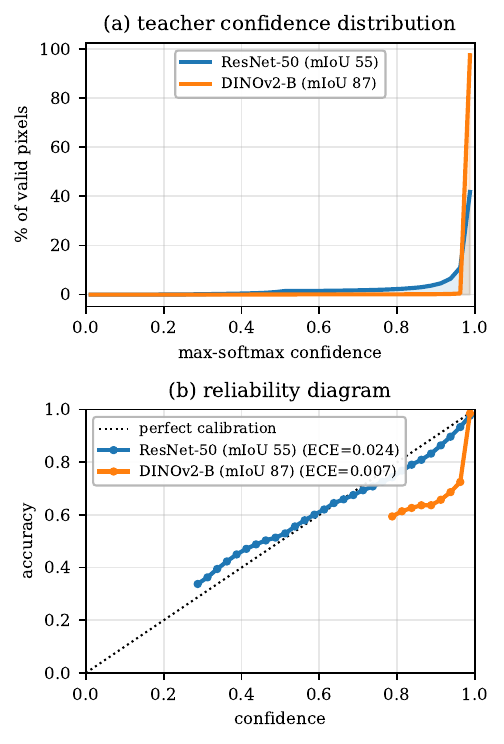}%
}{\fbox{\parbox{0.9\linewidth}{\centering\color{gray}\vspace{1.8cm}
  \textbf{Figure placeholder} — \texttt{scripts/plot\_calibration.py}
  (ResNet-50 vs DINOv2 teacher).\vspace{1.8cm}}}}
\caption{\textbf{The confidence-statistics half of the regime shift.} (Pascal~VOC
1/8 val pixels, EMA teachers; ResNet-50 is a representative partially-trained
model.) \textbf{(a)} The ResNet-50 teacher spreads its max-softmax confidence across the
range, whereas the DINOv2 teacher piles $98\%$ of pixels into $[0.95,1]$ ($53\%$
for ResNet). \textbf{(b)} Reliability. Read the two panels together: the DINOv2
teacher's \emph{aggregate} ECE is the lower of the two ($0.007$ vs.\ $0.024$)
because almost all of its mass sits in the top bin, where it is accurate --- but
in the sparsely-populated band below $0.95$ it is markedly \emph{over}-confident,
its curve falling well under the diagonal where the ResNet's tracks it. Both
facts matter, and they point the same way: the confident mass is trustworthy
(hence strict works), while the thin low-confidence band a relaxed cutoff would
recover is exactly where the teacher is wrong (hence adaptive does not). The
obstacle for adaptive thresholding is therefore not aggregate miscalibration but
\emph{dynamic-range collapse}: almost no spread left to threshold on, and what
spread remains is error-enriched.}
\label{fig:calibration}
\end{figure}

\section{Preliminaries}
\label{sec:prelim}

\begin{table}[t]
\centering
\caption{\textbf{Notation.}}
\label{tab:notation}
\small
\begin{tabular}{ll}
\toprule
Symbol & Meaning \\
\midrule
$\mathcal{L},\mathcal{U}$ & labeled / unlabeled sets \\
$\mathcal{L}_{\mathrm{tr}},\mathcal{L}_{\mathrm{cal}}$ & training / calibration slices of $\mathcal{L}$ \\
$f_\theta, f_{\theta'}$ & student / EMA teacher \\
$\hat y_{h,w}, c_{h,w}$ & pseudo-label and confidence at pixel $(h,w)$ \\
$\tau_k$ & retention threshold for class $k$ \\
$\bar c_t$ & EMA of mean teacher confidence \\
$\mu_k$ & running mean confidence of class $k$ \\
$\widehat\varepsilon_k(\tau)$ & estimated per-class noise rate at cutoff $\tau$ \\
$d_t$ & EMA teacher decay at step $t$ \\
$\gamma,\,\beta_b$ & confidence-weighting exponent, boundary weight \\
$\tau_0,\,\beta$ & base threshold and slope of the dynamic rule \\
$\rho_t$ & retention (mask) ratio at step $t$ \\
$\pi_{\mathrm{kept}}$ & reliability of the confident set, $\Pr[\text{correct}\mid c{\ge}\tau]$ \\
$s,\,\alpha$ & floor scale, calibration fraction \\
$K$ & number of classes \\
\bottomrule
\end{tabular}
\end{table}

\paragraph{Problem setup}
(\Cref{tab:notation} summarises notation.)
We are given a labeled set $\mathcal{L}=\{(x_i,y_i)\}_{i=1}^{N_L}$ with
dense labels $y_i\in\{1,\dots,K\}^{H\times W}$ and a larger unlabeled set
$\mathcal{U}=\{x_j\}_{j=1}^{N_U}$, $N_U\gg N_L$. We train a segmentation
network $f_\theta$ (encoder + decoder) and maintain an exponential
moving-average (EMA) teacher $f_{\theta'}$ with
$\theta'_{t+1}=d_t\theta'_t+(1-d_t)\theta_t$ and ramp-up
$d_t=\min(1-1/(t{+}1),\,0.996)$. (We write the EMA decay $d_t$ to keep $\gamma$
for CW-BASS's confidence-weighting exponent, \Cref{eq:cwce}.)

\paragraph{Weak-to-strong consistency}
For an unlabeled image $x$, the teacher produces per-pixel class posteriors
$p_{\theta'}(x)\in\Delta^{K-1}$ on a weakly augmented view; the pseudo-label
and confidence at pixel $(h,w)$ are
\begin{equation}
\hat y_{h,w}=\arg\max_k\, p_{\theta'}(x)_{h,w,k},\qquad
c_{h,w}=\max_k\, p_{\theta'}(x)_{h,w,k}.
\label{eq:pseudo}
\end{equation}
A retention mask $\mathcal{M}=\{(h,w):c_{h,w}\ge\tau_{\hat y_{h,w}}\}$ selects
pseudo-labels above a (possibly class-dependent) threshold $\tau_k$, and the
student is supervised on strongly perturbed views to match $\hat y$ on
$\mathcal{M}$. Following UniMatch~\cite{unimatch}, we use two strong streams:
an image-space stream (strong photometric augmentation + CutMix) and a
feature-perturbation stream (channel dropout), sharing one backbone forward.
The total objective is
\begin{equation}
\mathcal{L}=\tfrac{1}{2}\Bigl(\mathcal{L}_x
+\tfrac{1}{2}(\mathcal{L}_s+\mathcal{L}_{\mathrm{fp}})\Bigr),
\label{eq:total}
\end{equation}
with $\mathcal{L}_x$ the supervised cross-entropy on $\mathcal{L}$ and
$\mathcal{L}_s,\mathcal{L}_{\mathrm{fp}}$ the unlabeled losses on the two
strong streams.

\paragraph{CW-BASS recap}
CW-BASS~\cite{cwbass} instantiates the unlabeled loss with two mechanisms.
(i) A \emph{confidence-weighted} cross-entropy weights each retained pixel by
$c_{h,w}^{\gamma}$,
\begin{equation}
\mathcal{L}_{\mathrm{cw}}=\frac{1}{|\mathcal{M}|}\sum_{(h,w)\in\mathcal{M}}
c_{h,w}^{\gamma}\,\mathrm{CE}\bigl(f_\theta(x)_{h,w},\,\hat y_{h,w}\bigr),
\label{eq:cwce}
\end{equation}
so confident pseudo-labels contribute more gradient ($\gamma$ controls the
sharpness). (ii) A \emph{boundary-aware} term adds cross-entropy restricted
to a Sobel-detected boundary mask $B$ of the pseudo-label map,
$\mathcal{L}_s=\mathcal{L}_{\mathrm{cw}}+\beta_b\,\mathbb{E}_{(h,w)\in B}
[\mathrm{CE}]$, emphasising object edges. (iii) A \emph{dynamic threshold}
adapts the global cutoff to the teacher's mean confidence
$\bar c=\mathbb{E}[c_{h,w}]$,
\begin{equation}
\tau^{\mathrm{dyn}}=\mathrm{clip}\!\left(\frac{\tau_0}{1+e^{-\beta(\bar c-1/2)}},\,
\tau_{\min},\,\tau_{\max}\right),
\label{eq:dyn}
\end{equation}
with $\tau_0{=}0.6$, $\beta{=}0.5$, $\tau_{\min}{=}0.3$, $\tau_{\max}{=}0.95$
in the original work. Because the sigmoid is bounded, \Cref{eq:dyn} can never
exceed $\tau_0\,\sigma(\beta/2)\approx0.34$ for \emph{any} teacher, a ceiling
fixed by these constants that becomes the crux of the collapse analysis
in~Sec.~\ref{sec:floor}.

\takeaway{The dynamic threshold~\eqref{eq:dyn} reads one scalar (the
teacher's mean confidence $\bar c$) through a bounded sigmoid. Everything
that follows turns on what happens to that rule when the signal it reads
saturates.}

\section{Method}
\label{sec:method}

\begin{figure*}[!tb]
\centering
\includegraphics[width=0.8\textwidth]{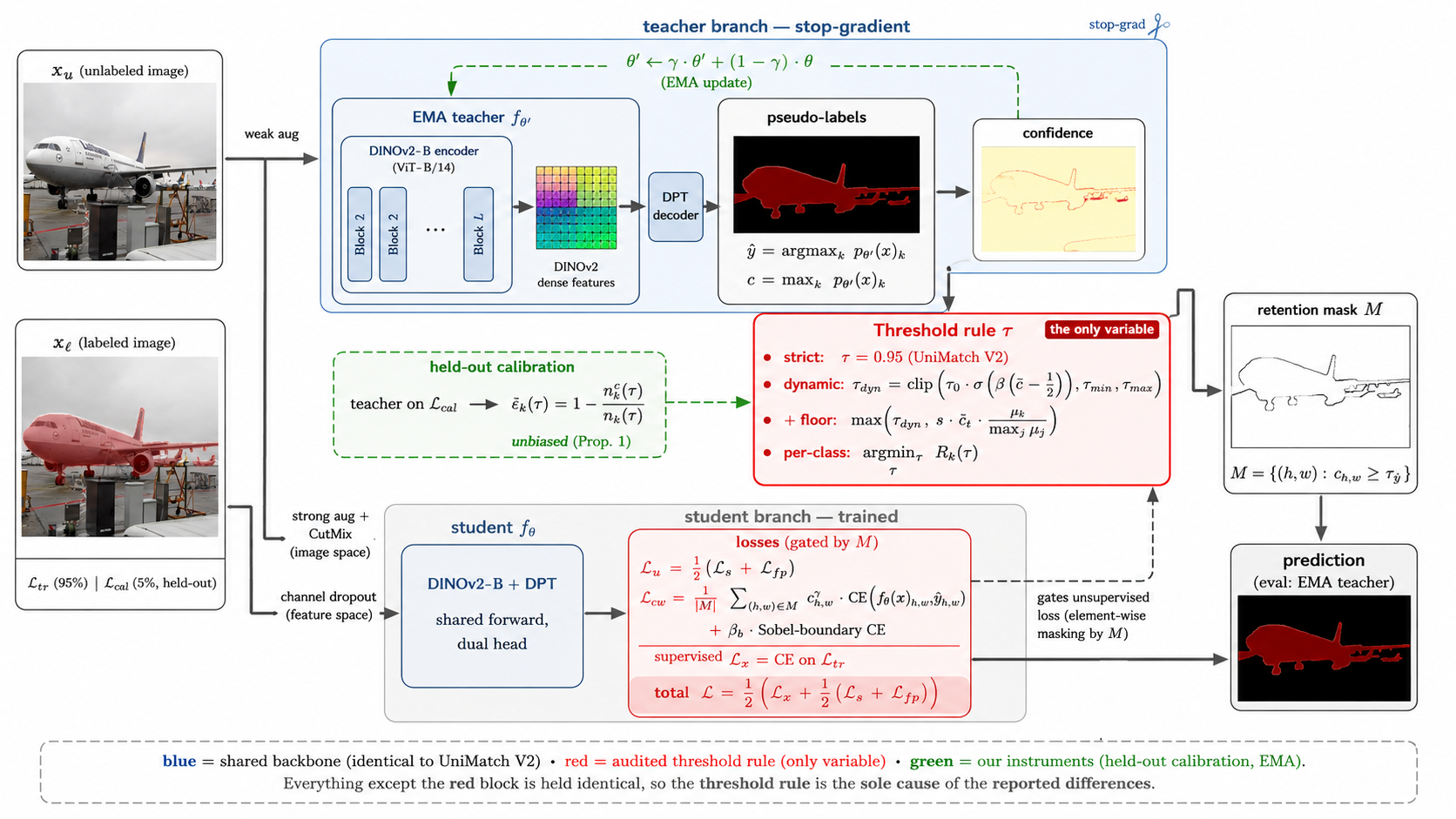}
\caption{\textbf{The CW-BASS~v2 pipeline.} A weak view of each unlabeled image
$x_u$ passes through the \textcolor{cbblue!60!black}{EMA teacher} (DINOv2 encoder
+ DPT decoder) to produce pseudo-labels $\hat y$ and confidences $c$. The
\textcolor{cbred!75!black}{\textbf{threshold rule}~$\tau$} (red, the \emph{gated}
component) turns $c$ into a retention mask $\mathcal{M}$; the four rules shown are
strict $\tau{=}0.95$, the dynamic global threshold, that threshold floored by the
self-adaptive rule, and a per-class adaptive threshold. Two strong views (CutMix;
channel dropout) drive the \textcolor{cbblue!60!black}{student}; the unlabeled
loss on $\mathcal{M}$ is confidence-weighted CE plus a Sobel-boundary term. The
\textcolor{cbgreen!50!black}{green} held-out calibration estimates unbiased
per-class noise $\widehat\varepsilon_k$ (Prop.~\ref{prop:unbiased}), a diagnostic
feeding the gate. All but the red block matches UniMatch~V2.}
\label{fig:arch}
\end{figure*}

CW-BASS~v2 keeps the scaffold of~\Cref{eq:total} and the two original CW-BASS
mechanisms (\Cref{eq:cwce} and the boundary term), and adds three constructs that
make pseudo-label selection \emph{saturation-aware}: held-out calibration
(Sec.~\ref{sec:calibration}), which reads pseudo-label noise without the
downward bias of in-batch estimates; a self-adaptive confidence floor
(Sec.~\ref{sec:floor}), the adaptive rule the method engages under an
unreliable teacher; and a one-pass \emph{gate} (Sec.~\ref{sec:gate}) that selects
between that floor and a strict fixed threshold by measuring the reliability of
the teacher's confident set, $\pi_{\mathrm{kept}}$, on the held-out slice.
\Cref{fig:arch} shows how they attach to
the pipeline. Sec.~\ref{sec:perclass} describes the more ambitious per-class
adaptive scheme we also investigated; Sec.~\ref{sec:negative} measures which rule
wins in which regime, and why, the evidence the gate acts on. Each mechanism is
independently toggleable, enabling the ablation in Sec.~\ref{sec:ablation}.

\subsection{Held-Out Calibration}
\label{sec:calibration}

A method that adapts a threshold to pseudo-label noise needs an estimate of
that noise. The natural estimate (used implicitly by the CW-BASS dynamic
threshold and explicitly by feedback-driven per-class
methods~\cite{encore}) is computed on the labeled \emph{training} pixels: for
class~$k$ at cutoff~$\tau$, count predicted-as-$k$ pixels with confidence
$\ge\tau$ and the fraction that match the ground truth. The problem is that
the student is \emph{trained} on exactly these pixels, so the estimate
measures training error, not generalisation error, and is biased toward zero.

\paragraph{Construction}
We partition the labeled set
$\mathcal{L}=\mathcal{L}_{\mathrm{tr}}\sqcup\mathcal{L}_{\mathrm{cal}}$ with
$|\mathcal{L}_{\mathrm{cal}}|=\alpha|\mathcal{L}|$, $\alpha{=}0.05$. The
supervised gradient $\mathcal{L}_x$ is computed \emph{only} on
$\mathcal{L}_{\mathrm{tr}}$; $\mathcal{L}_{\mathrm{cal}}$ never enters the
supervised loss. Once per epoch we run the EMA teacher over
$\mathcal{L}_{\mathrm{cal}}$ and populate per-class noise rates
\begin{equation}
\widehat\varepsilon_k(\tau)=1-\frac{n_k^{c}(\tau)}{n_k(\tau)},
\label{eq:eps-hat}
\end{equation}
where $n_k(\tau)$ counts calibration pixels predicted as $k$ with confidence
$\ge\tau$ and $n_k^{c}(\tau)$ the subset matching ground truth. We use
$\widehat\varepsilon_k(\tau)$ as an \emph{unbiased point estimate} and control
signal, and deliberately attach no high-probability confidence interval to it:
pixels within an image are strongly spatially correlated, so the effective sample
size is closer to a count of \emph{images} than of pixels (on Pascal 1/8 the
calibration slice is only $\alpha|\mathcal{L}|\!\approx\!9$ images), and a
pixel-count concentration bound would claim a confidence it has not earned. No
claim in the paper depends on such an interval; the estimator's role is to give an
unbiased read of per-class noise (Proposition~\ref{prop:unbiased}), and, in the
gate, of the aggregate reliability $\pi_{\mathrm{kept}}$ (Sec.~\ref{sec:gate}). The
per-class noise histogram is reset at each epoch boundary so the estimate reflects
the \emph{current} teacher rather than stale statistics accumulated early in
training.

\paragraph{Why held-out estimation is necessary}
We formalise the bias that motivates the construction. Fix the teacher
parameters $\theta'$ and a class $k$ and cutoff $\tau$. Let
$\varepsilon_k(\tau)=\Pr[\hat y\neq y \mid \hat y=k,\,c\ge\tau]$ be the true
per-class noise rate on the data distribution.

\begin{proposition}[Held-out de-biasing]
\label{prop:unbiased}
The calibration estimator $\widehat\varepsilon_k^{\,\mathrm{cal}}(\tau)$ of
\Cref{eq:eps-hat}, computed on $\mathcal{L}_{\mathrm{cal}}$ which is disjoint
from $\mathcal{L}_{\mathrm{tr}}$ and unused by the supervised gradient, is
\emph{conditionally unbiased} given the teacher $\theta'$ and the selected
count $n_k(\tau){>}0$:
$\mathbb{E}\bigl[\widehat\varepsilon_k^{\,\mathrm{cal}}(\tau)
\mid \theta',\,n_k(\tau)\bigr]=\varepsilon_k(\tau)$ (the conditioning is needed
because \Cref{eq:eps-hat} is a ratio of random counts; given the selected set
it is a mean of i.i.d.\ error indicators). The in-batch estimator
$\widehat\varepsilon_k^{\,\mathrm{in}}(\tau)$ computed on
$\mathcal{L}_{\mathrm{tr}}$ satisfies
$\mathbb{E}\bigl[\widehat\varepsilon_k^{\,\mathrm{in}}(\tau)\mid\theta',\,n_k(\tau)\bigr]
\le\varepsilon_k(\tau)$, with the gap equal to the teacher's train--population
generalisation gap on confident class-$k$ pixels.
\end{proposition}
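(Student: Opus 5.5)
Since $\widehat\varepsilon_k^{\,\mathrm{cal}}(\tau)$ is an empirical mean of error indicators over a selected pixel set, the plan is to reduce both claims to an exchangeability argument that holds once we condition on the teacher. Fix $\theta'$ and regard it as a deterministic function of $\mathcal{L}_{\mathrm{tr}}$, $\mathcal{U}$ and the training randomness. The key modelling step, which we make explicit as an assumption of the proposition, is this: calibration pixels $(x,y)$ are drawn i.i.d.\ from the data distribution independently of everything that determines $\theta'$. This holds because $\mathcal{L}_{\mathrm{cal}}$ is disjoint from $\mathcal{L}_{\mathrm{tr}}$ and never enters $\mathcal{L}_x$. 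The EMA teacher does see $\mathcal{L}_{\mathrm{cal}}$ once per epoch, but only in a forward pass whose output does not feed back into the gradient. The statement's own parenthetical asks us to treat pixels as i.i.d.\ draws. We will note, without relying on it further, that spatial correlation affects only the variance and not the mean.

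\textbf{Unbiasedness.} Write $Z_i=\mathbf{1}[\hat y_i=k,\,c_i\ge\tau]$ and $E_i=\mathbf{1}[\hat y_i\ne y_i]$. Then $n_k(\tau)=\sum_i Z_i$ and $n_k(\tau)-n_k^{c}(\tau)=\sum_i Z_iE_i$. Given $\theta'$, the pairs $(Z_i,E_i)$ are i.i.d. Conditioning additionally on the selection vector $(Z_i)_i$, the $E_i$ with $Z_i=1$ are i.i.d.\ Bernoulli with mean
\[
\Pr[\hat y\ne y\mid \hat y=k,\,c\ge\tau,\,\theta']=\varepsilon_k(\tau).
\]
Hence $\mathbb{E}[\widehat\varepsilon_k^{\,\mathrm{cal}}\mid\theta',(Z_i)]=\varepsilon_k(\tau)$ whenever $n_k(\tau)>0$. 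By the tower property this conditional expectation depends on $(Z_i)$ only through $n_k$, so it equals $\mathbb{E}[\,\cdot\mid\theta',n_k]=\varepsilon_k(\tau)$ as claimed.

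\textbf{In-batch inequality.} Apply the same decomposition on $\mathcal{L}_{\mathrm{tr}}$. The selected error indicators are still exchangeable given $n_k$, so
\[
\mathbb{E}[\widehat\varepsilon_k^{\,\mathrm{in}}\mid\theta',n_k]=\varepsilon_k^{\mathrm{tr}}(\tau),
\]
where $\varepsilon_k^{\mathrm{tr}}(\tau)$ is the conditional error of $\theta'$ on confident class-$k$ \emph{training} pixels (the population quantity obtained by averaging over the draw of $\mathcal{L}_{\mathrm{tr}}$ jointly with $\theta'$). The gap $\varepsilon_k(\tau)-\varepsilon_k^{\mathrm{tr}}(\tau)$ is then, by definition, the train--population generalisation gap on confident class-$k$ pixels, which gives the stated equality for the gap.

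\textbf{Main obstacle: the sign of the gap.} The inequality requires this gap to be nonnegative, and nothing in the setup forces a trained network to fit its training pixels at least as well as population pixels. I would not try to derive this from first principles. Instead I would state it as a mild condition: empirical-risk minimisation on $\mathcal{L}_{\mathrm{tr}}$ does not increase the confident-set error on $\mathcal{L}_{\mathrm{tr}}$ relative to fresh draws. As partial justification I would sketch the standard argument: with $\theta'$ an (EMA of a) minimiser of the training loss over a class containing the population-optimal predictor, the expected training error is at most the expected population error. For the 0--1 error restricted to the selected set, however, this is only a heuristic, since the selection event itself depends on $\theta'$. The proof must therefore flag the inequality as holding under a ``non-negative generalisation gap'' condition rather than unconditionally. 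The unbiasedness half needs no such condition.
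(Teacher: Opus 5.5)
Your proposal follows the paper's sketch. You condition on $\theta'$ and the selection event, so that $\widehat\varepsilon_k^{\,\mathrm{cal}}(\tau)$ is a mean of i.i.d.\ Bernoulli error indicators with mean $\varepsilon_k(\tau)$; your tower-property step down to $n_k(\tau)$ just makes that conditioning explicit. You then identify the in-batch shortfall, by definition, with the train--population gap on confident class-$k$ pixels, as the paper does.

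The ``non-negative generalisation gap'' condition you flag is exactly what the paper asserts as ``the standard optimism of the training error,'' and your caution is warranted. That optimism is a theorem for exact empirical-risk minimisers of the loss being scored, not for an EMA teacher's 0--1 error on a $\theta'$-dependent confident set. One correction to an aside you do not use: spatial correlation is not only a variance issue, because it can bias a pooled ratio estimator conditioned on $n_k(\tau)$.
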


\begin{proof}[Proof sketch]
Conditioned on $\theta'$ and the selected set $\{c\ge\tau,\,\hat y=k\}$ of size
$n_k(\tau)$, the calibration pixels are drawn from the same distribution as the
population but are statistically independent of the optimisation of $\theta$
(they never contribute a gradient), so the empirical error
frequency~\eqref{eq:eps-hat} is a conditional mean of i.i.d.\ Bernoulli error
indicators: an unbiased Monte-Carlo estimate of $\varepsilon_k(\tau)$. (The
conditioning matters: \Cref{eq:eps-hat} is a ratio of random counts, unbiased
for $\varepsilon_k$ given the selection event, not marginally.) For the
in-batch estimator, the teacher and student are
coupled to $\mathcal{L}_{\mathrm{tr}}$ through training; a model fit to those
pixels has empirical error no larger than its population error in expectation
(the standard optimism of the training error), and the inequality is strict
whenever the network has capacity to memorise, which segmentation networks
do. The difference is exactly the generalisation gap restricted to confident
class-$k$ predictions. \end{proof}

\Cref{prop:unbiased} explains a concrete failure: under an in-batch estimate,
$\widehat\varepsilon_k^{\,\mathrm{in}}\!\to\!0$ as the student memorises
$\mathcal{L}_{\mathrm{tr}}$, so any threshold that loosens as estimated noise
falls is driven downward \emph{regardless of true pseudo-label quality}. The
held-out construction breaks this feedback loop by construction. This bias is
not hypothetical, and it is exactly large enough to matter for the gate: on the
ADE20K teacher the confident-set reliability measured \emph{in-batch} (on the
labeled-train pixels the student fits) is $\pi_{\mathrm{kept}}^{\mathrm{in}}{=}98.4\%$,
indistinguishable from the reliable Pascal/Cityscapes teachers, whereas measured on
\emph{held-out} data it is only $89.3\%$ (Sec.~\ref{sec:gate}). A gate reading the
in-batch value would wrongly judge this teacher reliable and select strict,
forfeiting the floor's win; the $9$-point optimism is what makes held-out
measurement necessary rather than cosmetic. Calibration is thus a
\emph{diagnostic and a control signal}, not a leaderboard trick.

\subsection{Self-Adaptive Confidence Floor}
\label{sec:floor}

The second failure mode is structural to~\Cref{eq:dyn}. We first show the bare
dynamic threshold collapses under a confident teacher, then introduce a floor
and prove it does not.

\paragraph{The collapse of the dynamic threshold}
For $\bar c\in[0,1]$ the sigmoid in~\Cref{eq:dyn} is increasing but bounded:
$\tau^{\mathrm{dyn}}\le \tau_0\,\sigma(\beta/2)$. With the original
$\tau_0{=}0.6,\beta{=}0.5$ this ceiling is $\approx 0.34$; the unclipped sigmoid
spans only $[0.263,0.337]$ over the entire range $\bar c\in[0,1]$, and after the
lower clamp $\tau_{\min}{=}0.3$ of \Cref{eq:dyn} the operative cutoff lives in
$[0.300,0.337]$, effectively a constant pinned at its lower clamp. This is harmless
at ResNet strength, where confident pixels are scarce and many sit below
$0.34$. But under DINOv2 the confidence distribution concentrates near~$1$,
so almost every pixel clears a $0.34$ cutoff and the retention rate
$\rho=\Pr[c\ge\tau^{\mathrm{dyn}}]\!\to\!1$. Training then optimises against
\emph{every} pseudo-label, noise included. We make this precise in
Corollary~\ref{cor:collapse}.

\paragraph{The floor}
Let $\bar c_t$ be the EMA of the unlabeled-view mean teacher confidence,
$\bar c_{t+1}=m\,\bar c_t+(1-m)\,\mathbb{E}_{x\in\mathcal{B}_u}[c(x)]$, and let
$\mu_k$ be the running per-class mean confidence. We define a per-class floor
\begin{equation}
\tau_k^{\mathrm{floor}}=s\cdot\bar c_t\cdot\frac{\mu_k}{\max_j\mu_j},
\qquad s\in(0,1],
\label{eq:floor}
\end{equation}
and lower-bound the operative threshold by it:
$\tau_k^{\mathrm{final}}=\max(\tau_k^{\mathrm{dyn}},\tau_k^{\mathrm{floor}})$,
with retention mask $\mathcal{M}=\{(h,w):c_{h,w}\ge\tau_{\hat y_{h,w}}^{\mathrm{final}}\}$.
The floor \emph{scales with} $\bar c_t$: as the teacher grows confident the
floor rises with it, so a fixed fraction of the (now higher) confidence mass is
always filtered. The per-class factor $\mu_k/\max_j\mu_j$ keeps the floor
lower for under-learned classes so their scarcer, lower-confidence
pseudo-labels are not over-filtered.

To analyse retention we adopt an explicit, falsifiable model of how the
confidence distribution moves during training.

\begin{assumption}[Scale-family confidence]
\label{ass:scale}
For unlabeled pixels with teacher-predicted class $k$, the confidence at
training time $t$ satisfies $c \stackrel{d}{=}\bar c_t\, V_k$, where $V_k\ge0$
is a time-invariant ``relative confidence'' with CDF $G_k$ and mean $\nu_k$,
and $\bar c_t$ is the overall mean confidence. (Equivalently: as training
proceeds the per-class confidence distribution rescales with the overall
confidence level but keeps its shape.)
\end{assumption}

Under \Cref{ass:scale}, $\mu_k=\mathbb{E}[c\mid k]=\bar c_t\,\nu_k$, so the
ratio $r_k:=\mu_k/\max_j\mu_j=\nu_k/\max_j\nu_j$ is time-invariant.

\begin{theorem}[Bounded, confidence-invariant retention]
\label{thm:floor}
Under \Cref{ass:scale}, the class-$k$ retention rate induced by the floor,
$\rho_k^{\mathrm{floor}}:=\Pr[c\ge\tau_k^{\mathrm{floor}}]$, is independent of
the overall confidence level $\bar c_t$:
\begin{equation}
\rho_k^{\mathrm{floor}}
=\Pr\!\bigl[\bar c_t V_k\ge s\,\bar c_t\, r_k\bigr]
=\Pr\bigl[V_k\ge s\,r_k\bigr]
=1-G_k(s\,r_k).
\label{eq:thm}
\end{equation}
Consequently $\rho_k^{\mathrm{floor}}<1$ whenever $G_k(s\,r_k)>0$ (i.e. there
is positive confidence mass below $s\,r_k$), and the actual retention obeys
$\Pr[c\ge\tau_k^{\mathrm{final}}]\le\rho_k^{\mathrm{floor}}$. As
$\bar c_t\to1$, retention does \emph{not} converge to~$1$; it remains pinned
at the fixed quantile $1-G_k(s\,r_k)$, controlled by~$s$.
\end{theorem}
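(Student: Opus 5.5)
The argument follows directly from \Cref{ass:scale}. The only step needing care is that the floor's two time-varying factors, $\bar c_t$ and the ratio $\mu_k/\max_j\mu_j$, are the same quantities that appear in the scale-family model. I will first pin these down and then substitute.

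\textbf{Step 1 (time-invariance of the class ratio).} Under \Cref{ass:scale}, $\mu_k=\mathbb{E}[c\mid \hat y=k]=\bar c_t\,\nu_k$ for every $k$. Since $\bar c_t>0$, it cancels in $\mu_k/\max_j\mu_j$. This gives $r_k=\nu_k/\max_j\nu_j$, a constant in $(0,1]$. The floor~\eqref{eq:floor} therefore factors as $\tau_k^{\mathrm{floor}}=\bar c_t\cdot(s\,r_k)$, where the second factor does not depend on $t$.

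I treat $\bar c_t$ and $\mu_k$ as their population values, i.e.\ the EMA has tracked the current mean. The EMA lag and finite-batch error in $\bar c_t$ and $\mu_k$ are the one genuine gap between the implemented rule and the idealised statement. I would state this identification explicitly as part of the model rather than try to bound the lag; that is the step I expect a careful reader to question.

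\textbf{Step 2 (the quantile identity).} Write $c\stackrel{d}{=}\bar c_t V_k$. The event $c\ge\tau_k^{\mathrm{floor}}$ has the same probability as $\bar c_t V_k\ge \bar c_t\,s\,r_k$. Dividing by $\bar c_t>0$ preserves the inequality and gives $\Pr[V_k\ge s r_k]=1-\Pr[V_k<s r_k]$.

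One subtlety is that $1-G_k(s r_k)$ equals $\Pr[V_k>s r_k]$ for a right-continuous CDF. So \eqref{eq:thm} holds exactly when $G_k$ has no atom at $s r_k$; otherwise it holds with $G_k(x^-)$ in place of $G_k(x)$. I would note this and assume $G_k$ continuous at that point, or read $G_k$ as the left limit. No dependence on $\bar c_t$ remains, which proves invariance.

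\textbf{Step 3 (consequences).} If $G_k(s r_k)>0$, then $\rho_k^{\mathrm{floor}}=1-G_k(s r_k)<1$ immediately. For the final threshold, $\tau_k^{\mathrm{final}}=\max(\tau_k^{\mathrm{dyn}},\tau_k^{\mathrm{floor}})\ge\tau_k^{\mathrm{floor}}$. Hence $\{c\ge\tau_k^{\mathrm{final}}\}\subseteq\{c\ge\tau_k^{\mathrm{floor}}\}$, and monotonicity of probability gives the retention upper bound.

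For the limit $\bar c_t\to1$, the right-hand side of \eqref{eq:thm} is constant in $t$. Retention therefore stays at $1-G_k(s r_k)$ along any trajectory of $\bar c_t$ and does not approach $1$. Because $G_k$ is nondecreasing, this quantile is nonincreasing in $s$, so $s$ controls it. I would close by remarking that the same computation with $\tau^{\mathrm{dyn}}\le 0.337$ fixed yields $\Pr[V_k\ge 0.337/\bar c_t]$. This tends to $1$ when $V_k$ concentrates near $1$, which sets up Corollary~\ref{cor:collapse}.
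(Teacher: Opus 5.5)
Your proof is correct and follows the paper's own argument: under \Cref{ass:scale} the $\bar c_t$ factors cancel inside the probability, positivity of $G_k(s r_k)$ gives $\rho_k^{\mathrm{floor}}<1$, and $\tau_k^{\mathrm{final}}\ge\tau_k^{\mathrm{floor}}$ gives the inclusion $\{c\ge\tau_k^{\mathrm{final}}\}\subseteq\{c\ge\tau_k^{\mathrm{floor}}\}$. Your two caveats are genuine sharpenings that the paper's proof passes over: the EMA $\bar c_t$ and the running $\mu_k$ must be identified with their population values, and in general $\Pr[V_k\ge s r_k]=1-G_k\bigl((s r_k)^-\bigr)$, so \eqref{eq:thm} needs $G_k$ to have no atom at $s r_k$.
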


\begin{proof}
The $\bar c_t$ factors cancel inside the probability in~\Cref{eq:thm},
leaving a statement about the time-invariant $V_k$; this is the claimed
$\bar c_t$-invariance. Positivity of $G_k(s r_k)$ gives
$\rho_k^{\mathrm{floor}}=1-G_k(s r_k)<1$. Finally
$\tau_k^{\mathrm{final}}\ge\tau_k^{\mathrm{floor}}$ implies
$\{c\ge\tau_k^{\mathrm{final}}\}\subseteq\{c\ge\tau_k^{\mathrm{floor}}\}$, so
the actual retention is no larger.
\end{proof}

\begin{corollary}[Collapse of the bare dynamic threshold]
\label{cor:collapse}
Under \Cref{ass:scale}, write $\bar\tau:=\tau_0\,\sigma(\beta/2)$ for the
constant ceiling that upper-bounds the dynamic threshold at every $\bar c_t$
(\Cref{eq:dyn}). The class-$k$ retention is
$\rho_k^{\mathrm{dyn}}=\Pr[\bar c_t V_k\ge\tau^{\mathrm{dyn}}]
=1-G_k(\tau^{\mathrm{dyn}}/\bar c_t)$, and since
$\tau^{\mathrm{dyn}}\le\bar\tau$,
\begin{equation}
\rho_k^{\mathrm{dyn}}\;\ge\;1-G_k\!\bigl(\bar\tau/\bar c_t\bigr)
\;\xrightarrow[\;\bar c_t\to1\;]{}\;1-G_k(\bar\tau).
\label{eq:dyn-limit}
\end{equation}
The contrast with \Cref{thm:floor} is the point: the floored rule holds a
\emph{fixed} quantile of $V_k$, whereas the bare rule's retention floor is
governed by the fixed cutoff ceiling $\bar\tau$ and rises toward
$1-G_k(\bar\tau)$ as the teacher grows confident. That limit equals~$1$
\emph{exactly when} the relative-confidence law $G_k$ places negligible mass
below $\bar\tau$, i.e.\ when the teacher saturates so that almost every
confident pixel clears the ceiling. This last step is a \emph{condition on the
data}, not a consequence of \Cref{ass:scale} (which fixes the shape of $G_k$):
Sec.~\ref{sec:anatomy} verifies it holds empirically ($98\%$ of pixels exceed
$0.95\gg\bar\tau\approx0.34$, so $G_k(\bar\tau)\approx0$), at which point
$\rho_k^{\mathrm{dyn}}\to1$ and the mask floods.
\end{corollary}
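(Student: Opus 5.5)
The plan is to repeat the computation of \Cref{thm:floor} with the dynamic cutoff in place of the floor, and then use monotonicity of the CDF $G_k$ together with the constant ceiling $\bar\tau$. First, under \Cref{ass:scale}, a class-$k$ pixel at step $t$ has $c\stackrel{d}{=}\bar c_t V_k$. The dynamic threshold of \Cref{eq:dyn} is a deterministic function of $\bar c$, a batch-level statistic treated like $\bar c_t$ in the theorem, so it acts as a constant inside the probability. For $\bar c_t>0$ we can divide by it to get $\rho_k^{\mathrm{dyn}}=\Pr[V_k\ge\tau^{\mathrm{dyn}}/\bar c_t]=1-G_k(\tau^{\mathrm{dyn}}/\bar c_t)$. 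Strictly this is $1-G_k(\cdot^-)$, but the left limit only makes the retention larger, so I would state the bound with $G_k$ and note that it matches the paper's convention whenever $G_k$ has no atom at the cutoff.

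Second, I would establish the ceiling. The logistic function is increasing, and $\bar c\le1$ gives $\beta(\bar c-\tfrac12)\le\beta/2$, so the unclipped value is at most $\tau_0\sigma(\beta/2)=\bar\tau$. The clip upper bound $\tau_{\max}=0.95$ exceeds $\bar\tau\approx0.34$, so clipping never raises the value above the larger of $\bar\tau$ and $\tau_{\min}$. One subtlety must be handled explicitly: with the original constants, $\tau_{\min}=0.3$ lies below $\bar\tau\approx0.337$. The lower clamp therefore keeps the threshold in $[0.300,0.337]$, and the bound $\tau^{\mathrm{dyn}}\le\bar\tau$ holds. For general constants I would define $\bar\tau:=\max(\tau_{\min},\tau_0\sigma(\beta/2))$ so that the inequality survives the clamp. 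Since $G_k$ is nondecreasing, $\tau^{\mathrm{dyn}}/\bar c_t\le\bar\tau/\bar c_t$ gives $\rho_k^{\mathrm{dyn}}\ge1-G_k(\bar\tau/\bar c_t)$.

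Third, I would take the limit $\bar c_t\to1$. Here $\bar\tau/\bar c_t$ decreases to $\bar\tau$ from above, and a CDF is right-continuous, so $G_k(\bar\tau/\bar c_t)\to G_k(\bar\tau)$. This yields \Cref{eq:dyn-limit}. It is the step needing the most care: the approach is from the right only because $\bar c_t\le1$, and right-continuity is exactly what makes the limit valid without extra assumptions on $G_k$.

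Finally, I would state the contrast with \Cref{thm:floor}. The floor's cutoff $s\,r_k$ on $V_k$ is fixed. The dynamic rule's cutoff $\tau^{\mathrm{dyn}}/\bar c_t$ on $V_k$ is bounded by $\bar\tau/\bar c_t$ and shrinks as confidence grows. The last claim is that the limit equals~$1$ iff $G_k(\bar\tau)=0$, which is immediate from the formula. As the corollary states, this condition is on the data rather than a consequence of the assumption, so I would close by pointing to the empirical check that $G_k(\bar\tau)\approx0$, not by proving it.
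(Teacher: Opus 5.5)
Your proposal is correct and is essentially the paper's own derivation, which the corollary carries inline: substitute $c\stackrel{d}{=}\bar c_t V_k$, bound $\tau^{\mathrm{dyn}}$ by the ceiling, use monotonicity of $G_k$, and let $\bar c_t\to1$. Your additions tighten points the paper passes over informally without changing the argument: the atom/left-limit caveat, right-continuity for the one-sided limit, and the observation that the ceiling survives the clamp only when $\tau_{\min}\le\tau_0\sigma(\beta/2)$, which holds for the original constants but not in the paper's own $\tau_{\min}$ sweep.
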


\Cref{thm:floor} and \Cref{cor:collapse} formalise the central design point:
because the floor scales with $\bar c_t$ and the bare threshold does not, the
floor converts a quantity that collapses to full retention into one that holds a
fixed, controllable quantile. We are deliberately modest about what this buys.
Once Assumption~\ref{ass:scale} is granted the $\bar c_t$ factors cancel and the
result is elementary; the substance is entirely whether the assumption holds,
which we treat as an empirical question answered by the flat mask-ratio trajectory
(\Cref{fig:trajectory}b, Remark~\ref{rem:scale}), not as a proof. We therefore
read \Cref{thm:floor} as an \emph{empirical-stability} statement, and we state
plainly that it has \emph{no bearing on accuracy}: with $s{=}0.95$ under a
saturated teacher the held quantile still admits $\approx0.91$ of the mass
(Sec.~\ref{sec:anatomy}, \Cref{fig:mechanism}b), so the floor buys a retention that
no longer drifts to~$1$, and \emph{nothing more}: indeed the floored rule is among
the \emph{worst} on mIoU (\Cref{tab:negative}). Its role in the paper is diagnostic
(it isolates the collapse mechanism by removing it), not a route to higher
accuracy.

\paragraph{Corollary~\ref{cor:collapse}, measured: what is, and is not, a
prediction.} Two things must be kept apart, because the distinction decides how
much the data can be said to confirm. The ceiling
$\bar\tau=\tau_0\sigma(\beta/2)\approx0.34$ is \emph{not} an empirical
discovery: it is fixed algebraically by the original constants
$\tau_0{=}0.6,\beta{=}0.5$, so \Cref{eq:dyn} cannot exceed $0.337$ for
\emph{any} backbone, foundation or otherwise. That the measured class-averaged
cutoff sits in $[0.300,0.331]$ over the entire matched-batch~$16$ run therefore
confirms only the algebra and that the sigmoid is saturated (its input
$\beta(\bar c-\tfrac12)$ pinned), not a law about foundation models. The
\emph{empirical} content of \Cref{cor:collapse} is downstream and genuinely
data-dependent: with the cutoff stuck near $0.34$ while the teacher's mean
confidence climbs past $0.88$, the condition $G_k(\bar\tau)\approx0$ holds
($98\%$ of pixels exceed $0.95$), so by~\eqref{eq:dyn-limit} retention floods
to~$1$ (Sec.~\ref{sec:anatomy}, \Cref{tab:retention}); \emph{that} flooding is the
prediction the run bears out. The ceiling is thus a property of the rule's
hyperparameters, not of the regime: raising $\tau_0$ or $\beta$, or lifting the
lower clamp $\tau_{\min}$, raises it, and in the limit $\tau_{\min}\!\to\!0.95$
the dynamic rule degenerates into the strict baseline itself, so the
interesting question is not whether \emph{these} constants collapse (they must,
by construction) but whether \emph{any} adapt-downward rule can beat strict once
the confidence range has collapsed; Sec.~\ref{sec:negative} takes that up
directly. By contrast, the floored rule's operative threshold (and the
per-class rule's, which rises to $0.805$) does track confidence upward: the
floored threshold climbs from $0.577$ to $0.922$ over training
(\Cref{thm:floor}), which is exactly why it, and only it, holds retention
bounded away from~$1$.

\begin{remark}[On Assumption~\ref{ass:scale}]
\label{rem:scale}
The scale-family model is an idealisation: real confidence distributions also
change shape (not only scale) during training. It is, however,
\emph{falsifiable} (it predicts a flat mask-ratio trajectory under the floor
and a rising one without it), and Sec.~\ref{sec:maskratio} reports that the
empirical trajectories match this prediction, which is the practical content
of the theorem. The prediction is also robust to moderate shape drift: only the
mass near the floored cutoff $s\,r_k$ affects retention, so shape changes confined
away from that quantile leave the flat-trajectory prediction intact, and the
empirical curve stays within $\pm0.03$ of its mean after epoch~$6$ despite the
distribution sharpening overall.
\end{remark}

\subsection{The Saturation Gate}
\label{sec:gate}
The calibration slice and the floor combine into a single, explicit decision. Run
the teacher once over a held-out labeled slice and measure
the \emph{reliability of its confident set},
\begin{equation}
\pi_{\mathrm{kept}}=\Pr\!\left[\hat y = y \;\middle|\; c \ge \tau\right],
\qquad \tau = 0.95,
\label{eq:pikept}
\end{equation}
the fraction of above-cutoff pixels whose pseudo-label is correct, estimated on
held-out rather than in-batch data (Proposition~\ref{prop:unbiased}). In
deployment this slice is $\mathcal{L}_{\mathrm{cal}}$, which the method already
maintains and which costs no extra labels. The gate then selects the operative
rule:
\begin{equation}
\mathrm{rule}=
\begin{cases}
\text{strict }\tau{=}0.95, & \pi_{\mathrm{kept}}\ge\tau\ \ (\text{reliable}),\\[2pt]
\text{self-adaptive floor}, & \pi_{\mathrm{kept}}<\tau\ \ (\text{unreliable}).
\end{cases}
\label{eq:gate}
\end{equation}
The criterion is a calibration test read off the teacher itself: does the retained
set match the confidence it required? When it does (a strong, well-calibrated
teacher), hard-training on the confident set is already near-optimal and relaxing
the cutoff only admits the error-enriched band (Sec.~\ref{sec:anatomy}); when it
does not (a confidently-miscalibrated teacher), hard-thresholding trusts wrong
labels at full weight and the floor's softened, confidence-weighted loss is
preferable. The decision is forward-only, needs no student update, and its
boundary is the pre-existing operating threshold $\tau$, not a value tuned on
downstream mIoU; Sec.~\ref{sec:generality} shows it picks the correct rule on six
teachers blind.
We are candid that on the standard foundation-model benchmarks
the teacher is reliable and the gate returns strict: its value is a principled,
measured \emph{decision}, not a switch that fires often.

\paragraph{How we evaluate the gate, and its one caveat}
The demonstration in Sec.~\ref{sec:generality} is a \emph{post-hoc} validation,
not a live deployment, and the distinction is worth stating plainly. We take the
six \emph{finished} strict teachers, run \texttt{scripts/gate\_stat.py} over each
dataset's validation split (forward-only, $200$ images, no gradient), and ask
whether the resulting $\pi_{\mathrm{kept}}$ predicts the sign of the
adaptive-vs-strict gap. Using the validation split rather than
$\mathcal{L}_{\mathrm{cal}}$ buys statistical precision for the validation
experiment: on Pascal 1/8 the calibration slice is only $\approx\!9$ images
(Sec.~\ref{sec:calibration}), too few to separate $98\%$ from $89\%$ with
confidence. It also means the reported $\pi_{\mathrm{kept}}$ values are
\emph{not} what a live run would have seen: they are measured on the evaluation
split and from a converged teacher. This is a validation of the criterion, not a
measurement of the deployed system, and we flag it as such
(Sec.~\ref{sec:limitations}). The criterion itself uses only quantities a live
run can compute from $\mathcal{L}_{\mathrm{cal}}$ in one forward pass, and the
separation it exploits ($98\%$ vs $89\%$) is far wider than the slice-size noise.

\subsection{Per-Class Adaptive Thresholding (Investigated Direction)}
\label{sec:perclass}

The held-out noise estimate~\eqref{eq:eps-hat} and the
floor~\eqref{eq:floor} are the ingredients of a natural, more ambitious
scheme: a fully \emph{per-class} adaptive threshold. We fit a
$\mathrm{Beta}(\alpha_k,\beta_k)$ to each class's confidence distribution by
method of moments, and choose $\tau_k$ to minimise a per-class risk that
trades estimated noise against coverage,
\begin{equation}
R_k(\tau)=\underbrace{\widehat\varepsilon_k(\tau)\,\rho_k(\tau)}_{\text{retained noise}}
+\underbrace{\lambda_k\,(1-\rho_k(\tau))}_{\text{coverage penalty}},
\label{eq:risk}
\end{equation}
where $\widehat\varepsilon_k$ is the held-out noise estimate~\eqref{eq:eps-hat}
(a point estimate: earlier drafts used a Hoeffding upper bound here, which we
dropped for the reason given in Sec.~\ref{sec:calibration} --- pixels are
spatially correlated, so a pixel-count concentration bound claims a confidence it
has not earned),
$\rho_k(\tau)=1-F_{\alpha_k,\beta_k}(\tau)$ the Beta-implied retention, and
$\lambda_k$ a coverage weight. To protect rare classes we scale the penalty by
rarity, $\lambda_k=\lambda_0\,(N_{\max}/N_k)^{\eta}$ (capped), so that
infrequent classes tolerate lower thresholds. During an initial warmup the
scheme falls back to the global dynamic threshold with the floor active; after
warmup it switches to $\arg\min_\tau R_k(\tau)$, still lower-bounded by the
per-class floor. This is the full machinery a coverage-seeking, ResNet-era
intuition recommends, built to the best of our ability, so that the audit
tests the direction and not a strawman. Sec.~\ref{sec:negative} reports that it
does not help.

\begin{algorithm}[t]
\caption{CW-BASS v2 training epoch (global-threshold variant)}
\label{alg:cwbass2}
\begin{algorithmic}[1]
\Require labeled split $\mathcal{L}_{\mathrm{tr}}$, calibration split
$\mathcal{L}_{\mathrm{cal}}$, unlabeled set $\mathcal{U}$, student $f_\theta$,
EMA teacher $f_{\theta'}$, floor scale $s$, EMA $m$
\State \textbf{Calibration pass:} reset noise histogram; for $x\in\mathcal{L}_{\mathrm{cal}}$, run $f_{\theta'}$ and accumulate $n_k,n_k^c$; set $\widehat\varepsilon_k$ via \Cref{eq:eps-hat} \Comment{unbiased (Prop.~\ref{prop:unbiased})}
\For{each unlabeled batch $x_u$ (with a labeled batch $(x_l,y_l)$)}
  \State $\mathcal{L}_x \gets \mathrm{CE}(f_\theta(x_l), y_l)$ \Comment{on $\mathcal{L}_{\mathrm{tr}}$ only}
  \State $\hat y, c \gets$ teacher posteriors on weak view of $x_u$ \Comment{\Cref{eq:pseudo}}
  \State $\bar c_t \gets m\,\bar c_t + (1{-}m)\,\mathbb{E}[c]$ \Comment{confidence EMA}
  \State $\tau^{\mathrm{dyn}} \gets$ \Cref{eq:dyn};\quad $\tau_k^{\mathrm{floor}} \gets s\,\bar c_t\,\mu_k/\max_j\mu_j$
  \State $\tau_k^{\mathrm{final}} \gets \max(\tau^{\mathrm{dyn}}, \tau_k^{\mathrm{floor}})$ \Comment{anti-collapse (Thm.~\ref{thm:floor})}
  \State $\mathcal{M} \gets \{(h,w): c_{h,w}\ge \tau_{\hat y_{h,w}}^{\mathrm{final}}\}$
  \State build strong views (aug+CutMix; feature dropout); $B\gets$ Sobel boundary of $\hat y$
  \State $\mathcal{L}_s,\mathcal{L}_{\mathrm{fp}} \gets$ CW + boundary loss on $\mathcal{M}$ \Comment{\Cref{eq:cwce}}
  \State $\mathcal{L} \gets \tfrac12(\mathcal{L}_x + \tfrac12(\mathcal{L}_s+\mathcal{L}_{\mathrm{fp}}))$; update $\theta$; EMA-update $\theta'$
\EndFor
\end{algorithmic}
\end{algorithm}

\section{Experiments and Main Results}
\label{sec:experiments}

\subsection{Datasets and Protocol}
\label{sec:setup}

\paragraph{Datasets}
Our empirical claims are made on \textbf{PASCAL~VOC 2012}~($21$ classes) under
the original (high-quality, $1464$-image) training protocol, using the same
splits as UniMatch~V2~\cite{unimatchv2} and the original CW-BASS~\cite{cwbass}
so the comparison is direct: the $1/8$ and $1/4$ partitions correspond to
$183$ and $366$ labeled images. This is deliberately the most \emph{saturated}
benchmark (UniMatch~V2 already reaches $87.9$/$88.9$~mIoU here) and is
therefore the cleanest setting in which to ask whether adaptive thresholding
adds anything on top of a strong teacher. We report mean IoU (mIoU) of the EMA
teacher, the standard evaluation model. Pascal is where we run the decisive
controls: the three-seed, strict-inclusive comparison (\Cref{tab:multiseed})
and the matched-batch trajectory dissection (\Cref{tab:negative}). To test
whether the inversion is a property of the \emph{teacher} rather than of
one benchmark, we carry the comparison to \textbf{Cityscapes} (an intermediate
teacher) and the long-tailed \textbf{ADE20K} ($150$ classes, whose teacher turns
out to be saturated but \emph{unreliable}, Sec.~\ref{sec:generality}), and across
the DINOv2-S/B/L scale family; those results are reported in
Sec.~\ref{sec:generality} (\Cref{tab:datasets}) and are what the paper's
generality claim rests on. Each dataset is run at its
standard protocol (Cityscapes crop~$686$, $120$ epochs; ADE20K crop~$518$,
$60$ epochs; both at the official batch~$16$); the ADE20K cells are
single-seed, a scope we restate where it matters (Sec.~\ref{sec:generality},
Sec.~\ref{sec:implications}).

\paragraph{Canonical protocol (``Family B'')}
To be protocol-comparable to UniMatch~V2's reported DINOv2-B numbers, our
DINOv2 runs use a two-group optimizer (no layer-wise LR decay), $60$ epochs,
weight decay $0.01$, backbone LR $5\times10^{-6}$, decoder LR
$2\times10^{-4}$, AdamW, crop $518$. The headline comparison runs all four
threshold rules at the official effective batch~$16$ (\Cref{tab:negative},
top block); additional adaptive runs at batch~$4$ probe how the failure
scales with batch (Sec.~\ref{sec:negative-confound}). For reference to the ResNet
era, the original CW-BASS protocol is ResNet-50 + DeepLabV3+, SGD (momentum
$0.9$), batch $16$, crop $321$, $80$ epochs.

\paragraph{What is controlled, and what is not}
\label{sec:whatiscontrolled}
Being exact about this matters, because the paper's central comparison is a
comparison of rules. All runs share the backbone, decoder, optimiser, LR
schedule, augmentation, EMA teacher, crop, batch and data splits. Two variables
beyond the threshold rule are \emph{not} held fixed, and we name them rather than
let the reader assume otherwise.
\begin{enumerate}[leftmargin=1.4em,itemsep=2pt,topsep=3pt]
  \item \textbf{Strict runs a different unlabeled loss.} The strict arm is the
  published UniMatch~V2 recipe (\texttt{unimatch\_v2}): plain cross-entropy on
  two \emph{independent} strong views, normalised by valid-pixel count. The five
  adaptive arms run the CW-BASS~v2 scaffold (\texttt{cwbass\_v2}): one strong
  view plus a feature-perturbation stream, confidence-weighted CE
  (\Cref{eq:cwce}) with the Sobel-boundary term, normalised by retained-pixel
  count. Strict-vs-adaptive is therefore a \emph{recipe-level} comparison of two
  published systems, not a single-factor one.
  \item \textbf{Two adaptive rules hold out labels.} The floor
  (\texttt{cwbass\_v2}) and per-class (\texttt{class\_adaptive}) rules reserve
  $\alpha{=}5\%$ of the labeled set for calibration, so they train on $174$
  Pascal 1/8 images against the other rules' $183$.
\end{enumerate}
The \emph{within-family} comparisons are single-factor: the dynamic, FreeMatch
and SoftMatch rules differ from one another in nothing but the rule, as do the
floor and per-class rules. \Cref{sec:negative-confound} bounds how much of the
strict-vs-adaptive gap the two confounds can carry.

\subsection{Implementation Details}
The DINOv2 backbone is \texttt{dinov2\_vitb14} with a DPT-lite
decoder~\cite{dpt}; training uses bf16 autocast. The two strong streams (image
+ feature perturbation) share a single backbone forward via a dual head, as in
UniMatch. Confidence weighting uses $\gamma{=}1$; the boundary term uses
weight $\beta_b{=}0.5$; calibration uses $\alpha{=}0.05$; the
floor uses $m{=}0.99$, $s{=}0.95$. The EMA teacher decays at
$\min(1-1/(t{+}1),0.996)$. Full configs are released with the code, along with
scripts that regenerate every figure and table. \Cref{tab:hyperparams} collects
the full per-dataset training configuration.

\begin{table}[t]
\centering
\caption{\textbf{Training configuration.} Settings for the reported matched-batch
runs. Backbone, decoder, optimiser, schedule, augmentation, crop, batch and
splits are shared by every arm; the strict arm additionally differs in its
unlabeled-loss form, and the floor/per-class arms in holding out $5\%$ of labels
(Sec.~\ref{sec:whatiscontrolled}).}
\label{tab:hyperparams}
\footnotesize
\setlength{\tabcolsep}{5pt}
\begin{tabular}{llll}
\toprule
Setting & Pascal~VOC & Cityscapes & ADE20K \\
\midrule
Backbone & \multicolumn{3}{c}{DINOv2-Base (ViT-B/14)} \\
Decoder & \multicolumn{3}{c}{DPT-lite} \\
Precision & \multicolumn{3}{c}{bf16 autocast} \\
Crop size & $518$ & $686$ & $518$ \\
Epochs & $60$ & $120$ & $60$ \\
Batch size & $16$ & $16$ & $16$ \\
Backbone LR & $5\!\times\!10^{-6}$ & $5\!\times\!10^{-6}$ & $5\!\times\!10^{-6}$ \\
Decoder LR & $2\!\times\!10^{-4}$ & $2\!\times\!10^{-4}$ & $2\!\times\!10^{-4}$ \\
Layer decay & $1.0$ & $1.0$ & $1.0$ \\
EMA decay & \multicolumn{3}{c}{$\min(1-1/(t{+}1),\,0.996)$} \\
\midrule
\multicolumn{4}{l}{\emph{Selection rule (the only variable):}} \\
Strict cutoff $\tau$ & \multicolumn{3}{c}{$0.95$} \\
Base threshold $\tau_0$ & \multicolumn{3}{c}{$0.6$} \\
Floor scale $s$ & \multicolumn{3}{c}{$0.95$} \\
Floor momentum $m$ & \multicolumn{3}{c}{$0.99$} \\
Calibration fraction $\alpha$ & \multicolumn{3}{c}{$0.05$} \\
Boundary weight $\beta_b$ & \multicolumn{3}{c}{$0.5$} \\
Confidence weight $\gamma$ & \multicolumn{3}{c}{$1$} \\
\bottomrule
\end{tabular}
\end{table}

\paragraph{Estimator and metric settings, in one place}
For reference: Pascal~VOC has $K{=}21$ classes; the held-out calibration uses
fraction $\alpha{=}0.05$, so the calibration slice is
$\alpha|\mathcal{L}|\!\approx\!9$ images on the 1/8 split ($\approx18$ on 1/4);
this image count, not the pixel count, is the effective sample size behind the
held-out estimator, which is why we treat it as a point estimate rather than a
bounded guarantee (Sec.~\ref{sec:calibration}). All
expected-calibration-error (ECE) figures use $40$ equal-width confidence bins
over $[0,1]$, and confidence histograms (\Cref{fig:confhist}) use $50$ bins.
We flag two caveats about the ECE comparison and do not let any conclusion rest
on it. First, ECE is bin-sensitive at these small magnitudes ($<0.03$ for both
teachers), so the $0.007$-vs-$0.024$ gap should be read as indicative, not
exact. Second, it is confounded by training state: the DINOv2 teacher is
converged whereas the ResNet-50 teacher is a representative partially-trained
model (\Cref{fig:calibration}), so the contrast is not a controlled calibration
measurement. The load-bearing statistic is instead bin-free and measured on the
converged teacher we actually use, the fraction of pixels above the cutoff
($98\%$ $\ge0.95$, \Cref{fig:confhist}). The argument depends only on that
\emph{dynamic-range collapse}, not on the teacher being better calibrated.

\paragraph{Computational overhead}
Both additions are nearly free. The floor adds only a handful of per-batch
reductions (an EMA update and per-class mean lookups), negligible against a
ViT-B forward/backward. The calibration pass is forward-only and runs once per
epoch over $\alpha|\mathcal{L}|$ images. Because $|\mathcal{L}|\!\ll\!N_U$ in
the SSSS regime (e.g.\ Pascal 1/8 has $183$ labeled images versus a $10^4$
unlabeled pool), the pass costs $\approx\!0.05\times183\approx9$ teacher
forwards per epoch, against the $649$ student forward/backward iterations
(batch~$16$ over the ${\sim}10^4$-image unlabeled pool) that make up an epoch:
under $1\%$ wall-clock overhead. No
extra parameters are introduced; the floor and calibration buffers are
$O(K)$.

\subsection{Main Results}

\paragraph{Reproduction validates the harness}
Before drawing any conclusion from the comparison we must show our implementation
is faithful, since a result from a broken baseline is worthless. Across
\emph{three seeds} the strict baseline averages $86.2{\pm}1.8$~mIoU on Pascal~VOC
1/8, its best seed reaching $87.40$, within $\sim\!0.5$ of UniMatch~V2's reported
$87.9$~\cite{unimatchv2} (\Cref{tab:multiseed}); the three-seed band brackets the
literature number, so the harness reproduces the SOTA operating point. (One seed
stalls at $84.09$, a genuine seed fragility we return to in
Sec.~\ref{sec:limitations}, not a harness fault.) The reproduction being sound, the
collapse of the adaptive variants, $3$--$5$~mIoU below strict at matched
batch~$16$, is a property of the threshold rule, not of our code.

\paragraph{What the results show}
Two subsections follow. Sec.~\ref{sec:sota} places CW-BASS~v2 in the published
SSSS landscape on all three datasets, and Sec.~\ref{sec:generality} shows what its
gate reads: on the reliable, saturated Pascal and Cityscapes teachers the gate
selects strict filtering and CW-BASS~v2 trails only UniMatch~V2, while on the one
teacher whose confident set is unreliable (ADE20K) it selects the floor and
\emph{exceeds} strict (and UniMatch~V2's reported $49.8$; single seed). For
context, the original CW-BASS reached $75.81$~mIoU on Pascal 1/8 with
ResNet-50~\cite{cwbass}; the $\sim\!12$~mIoU jump to $87.4$ is almost entirely the
DINOv2 backbone plus the strict-threshold recipe the gate selects here; the
adaptive machinery earns its keep only where the confident set is unreliable, not
on a clean saturated teacher. The controlled, batch-matched comparison that
establishes \emph{why} the gate is needed, strict dominating every adaptive rule on
a saturated teacher, the early-peak-then-collapse signature, and the mechanism
behind it, is the subject of Sec.~\ref{sec:negative}.
\subsection{CW-BASS~v2 in the SSSS Landscape}
\label{sec:sota}
\begin{table*}[!tb]
\centering
\caption{\textbf{CW-BASS~v2 in the Pascal~VOC SSSS landscape, mIoU (\%).} Classic
protocol; headers are labeled-image counts. Prior numbers from the cited papers;
the accuracy jump tracks the \emph{backbone}, not selection cleverness. On this
saturated teacher CW-BASS~v2's gate selects strict filtering, so \emph{our row is
our reproduction of the UniMatch~V2 recipe} and contains none of the adaptive
machinery --- that is the gate working as designed, and the row should be read as
a reproduction rather than as a new method beating the field. Best seed shown
(seed~$0$; $1/8$ three-seed mean $86.2{\pm}1.8$, \Cref{tab:multiseed}; $1/16$,
$1/4$ single seed; $1/2$ and Full not run for compute). Shading: \medallegend.}
\label{tab:landscape}
\small
\setlength{\tabcolsep}{5pt}
\resizebox{0.75\textwidth}{!}{%
\begin{tabular}{llccccc}
\toprule
Method & Backbone & 1/16\,(92) & 1/8\,(183) & 1/4\,(366) & 1/2\,(732) & Full\,(1464) \\
\midrule
\multicolumn{7}{l}{\emph{Specialised backbones (prior work):}} \\
Supervised baseline              & ResNet-101   & 45.1 & 55.3 & 64.8 & 69.7 & 73.5 \\
CW-BASS~\cite{cwbass}            & ResNet-50    & 72.80 & 75.81 & 76.20 & 77.15 & --   \\
ST++~\cite{stplusplus}           & ResNet-101   & 65.2 & 71.0 & 74.6 & 77.3 & 79.1 \\
U\textsuperscript{2}PL~\cite{u2pl} & ResNet-101 & 68.0 & 69.2 & 73.7 & 76.2 & 79.5 \\
PS-MT~\cite{psmt}                & ResNet-101   & 65.8 & 69.6 & 76.6 & 78.4 & 80.0 \\
AugSeg~\cite{augseg}             & ResNet-101   & 71.1 & 75.5 & 78.8 & 80.3 & 81.4 \\
UniMatch~\cite{unimatch}         & ResNet-101   & 75.2 & 77.2 & 78.8 & 79.9 & 81.2 \\
CorrMatch~\cite{corrMatch}       & ResNet-101   & 76.4 & 78.5 & 79.4 & 80.6 & 81.8 \\
DDFP~\cite{ddfp}                 & ResNet-101   & 75.0 & 78.0 & 79.5 & 81.2 & 82.0 \\
PrevMatch~\cite{prevmatch}       & ResNet-101   & 77.0 & 78.5 & 79.6 & 80.4 & 81.6 \\
BeyondPixels~\cite{beyondpixels} & ResNet-101   & 77.3 & 78.6 & 79.8 & 80.8 & 81.7 \\
AllSpark~\cite{allspark}         & MiT-B5       & 76.1 & 78.4 & 79.8 & 80.8 & 82.1 \\
SemiVL~\cite{semivl}             & CLIP ViT-B   & \rkC 84.0 & \rkC 85.6 & \rkC 86.0 & \rkB 86.7 & \rkC 87.3 \\
\midrule
\multicolumn{7}{l}{\emph{Foundation backbone (DINOv2):}} \\
UniMatch~V2~\cite{unimatchv2}    & DINOv2-S     & 79.0 & 85.5 & 85.9 & \rkB 86.7 & \rkB 87.8 \\
UniMatch~V2~\cite{unimatchv2}    & DINOv2-B     & \rkA 86.3 & \rkA 87.9 & \rkA 88.9 & \rkA 90.0 & \rkA 90.8 \\
CW-BASS~v2 (ours, gate$\to$strict) & DINOv2-B   & \rkB 84.47 & \rkB 87.40 & \rkB 88.59 & --   & --   \\
\bottomrule
\end{tabular}}
\end{table*}

For completeness we situate CW-BASS~v2 against published SSSS methods on all three
datasets: \Cref{tab:landscape} for Pascal~VOC, \Cref{tab:sota_cs} for Cityscapes,
and \Cref{tab:sota_ade} for ADE20K. Accuracy tracks the backbone throughout, and
CW-BASS~v2 sits in the top DINOv2 tier, and we are explicit about what that does
and does not mean. On the saturated Pascal and Cityscapes teachers the gate
selects strict filtering, so those rows are our reproduction of the UniMatch~V2
recipe: they carry no adaptive machinery, they trail UniMatch~V2's reported
numbers by $0.3$--$0.5$, and they are evidence that the harness is faithful, not
that the method beats the field. The row where CW-BASS~v2 is doing something of
its own is ADE20K, where the gate selects the floor and \emph{exceeds} both our
strict baseline ($+1.5$) and UniMatch~V2-B's reported $49.8$ (single seed). The
per-rule breakdown behind these rows is in \Cref{tab:multiseed,tab:datasets}.

\begin{table}[t]
\centering
\caption{\textbf{Cityscapes SSSS landscape, mIoU (\%).} Headers are labeled-image
fractions; DINOv2 rows from UniMatch~V2 and CW-BASS~v2 (ours, single seed, best
EMA, crop~$686$). On this reliable teacher the gate again selects strict, so as
on Pascal our row is a UniMatch~V2 reproduction, not an adaptive variant.
Shading: \medallegend.}
\label{tab:sota_cs}
\footnotesize
\setlength{\tabcolsep}{5pt}
\resizebox{\columnwidth}{!}{%
\begin{tabular}{llcccc}
\toprule
Method & Backbone & 1/16 & 1/8 & 1/4 & 1/2 \\
\midrule
\multicolumn{6}{l}{\emph{Specialised backbones (prior work):}} \\
CW-BASS~\cite{cwbass}            & ResNet-50  & 75.00 & 77.20 & 78.43 & --   \\
AugSeg~\cite{augseg}             & ResNet-101 & 75.2 & 77.8 & 79.6 & 80.4 \\
UniMatch~\cite{unimatch}         & ResNet-101 & 76.6 & 77.9 & 79.2 & 79.5 \\
CorrMatch~\cite{corrMatch}       & ResNet-101 & 77.3 & 78.5 & 79.4 & 80.4 \\
BeyondPixels~\cite{beyondpixels} & ResNet-101 & 78.5 & 79.2 & 80.9 & \rkC 81.3 \\
SemiVL~\cite{semivl}             & CLIP ViT-B & 77.9 & 79.4 & 80.3 & 80.6 \\
\midrule
\multicolumn{6}{l}{\emph{Foundation backbone (DINOv2):}} \\
UniMatch~V2~\cite{unimatchv2}    & DINOv2-S   & \rkC 80.6 & \rkC 81.9 & \rkC 82.4 & \rkB 82.6 \\
UniMatch~V2~\cite{unimatchv2}    & DINOv2-B   & \rkA 83.6 & \rkA 84.3 & \rkA 84.5 & \rkA 85.1 \\
CW-BASS~v2 (ours, gate$\to$strict) & DINOv2-B & \rkB 83.16 & \rkB 83.96 & \rkB 83.99 & --   \\
\bottomrule
\end{tabular}}
\end{table}

\begin{table}[t]
\centering
\caption{\textbf{ADE20K SSSS landscape, mIoU (\%).} Headers are labeled-image counts;
few methods report SSSS ADE20K. On this \emph{confidently-unreliable} teacher
($\pi_{\mathrm{kept}}{\approx}89\%$, \Cref{tab:gate}) CW-BASS~v2's gate selects the
floor, reaching $50.58$ at $1/8$ ($+1.5$ over our strict baseline and above
UniMatch~V2-B's $49.8$; single seed): the regime where adaptive filtering earns its
keep. Shading: \medallegend.}
\label{tab:sota_ade}
\footnotesize
\setlength{\tabcolsep}{4pt}
\resizebox{\columnwidth}{!}{%
\begin{tabular}{llccccc}
\toprule
Method & Backbone & 1/64\,(316) & 1/32\,(631) & 1/16\,(1263) & 1/8\,(2526) & 1/4\,(5052) \\
\midrule
UniMatch~\cite{unimatch}         & ResNet-101 & 21.6 & 28.1 & 31.5 & 34.6 & -- \\
UniMatch~\cite{unimatch}         & CLIP ViT-B & 25.3 & 31.2 & 34.4 & 38.0 & -- \\
SemiVL~\cite{semivl}             & CLIP ViT-B & \rkB 33.7 & \rkC 35.1 & \rkC 37.2 & 39.4 & -- \\
UniMatch~V2~\cite{unimatchv2}    & DINOv2-S   & \rkC 31.5 & \rkB 38.1 & \rkB 40.7 & 44.4 & \rkB 45.8 \\
UniMatch~V2~\cite{unimatchv2}    & DINOv2-B   & \rkA 38.7 & \rkA 45.0 & \rkA 46.7 & \rkB 49.8 & \rkA 52.0 \\
\midrule
Strict $\tau{=}0.95$ (ours, repro.) & DINOv2-B & --   & --   & --   & \rkC 49.10 & -- \\
CW-BASS~v2 (ours, gate$\to$floor) & DINOv2-B  & --   & --   & --   & \rkA 50.58 & -- \\
\bottomrule
\end{tabular}%
}
\end{table}

\Cref{fig:landscape} plots the Pascal~VOC landscape, making the small
adaptation penalty and the large backbone jump visible at a glance.

\begin{figure}[!htb]
\centering
\includegraphics[width=0.9\columnwidth]{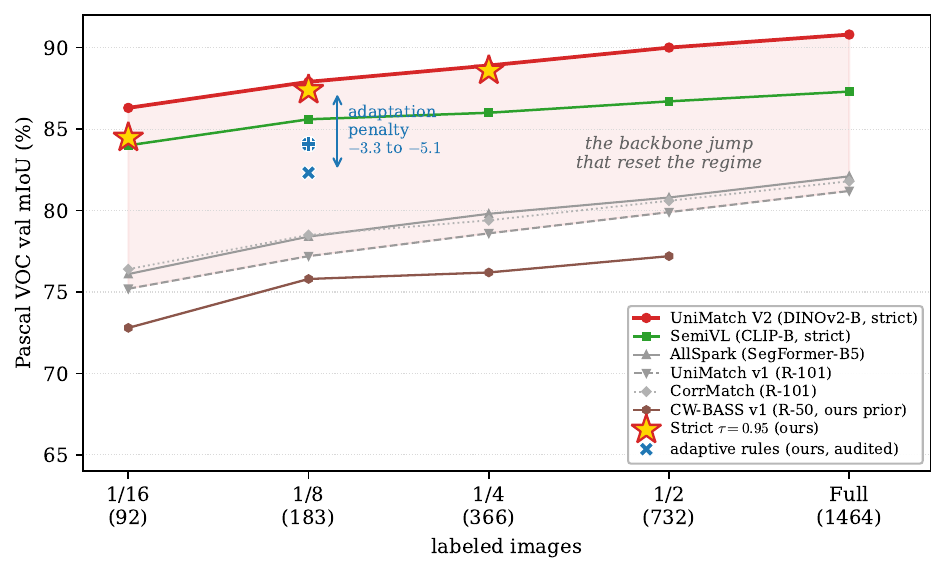}
\caption{\textbf{Pascal~VOC SSSS landscape} (val mIoU vs.\ labeled-image budget).
CW-BASS~v2's strict operating point ($\star$) sits in the top DINOv2 tier; the
adaptive rules (audited, $\times$) fall a fixed $3.3$--$5.1$~mIoU below it (shaded
\emph{adaptation penalty}), a gap dwarfed by the backbone jump over the
ResNet-101 field that reset the regime.}
\label{fig:landscape}
\end{figure}

\subsection{Generality and the Operationalized Gate}
\label{sec:generality}
A single-backbone, single-dataset result cannot support a regime-level claim, so
we sweep the DINOv2 scale family (ViT-S/B/L) and three datasets.
\Cref{tab:datasets} gives the mIoU: at matched batch~$16$, strict beats every
adaptive rule on Pascal (by $3$--$5$, at S/B/L alike), the rules \emph{tie} strict
on Cityscapes (spread ${\le}0.7$), and the floor \emph{edges ahead} on ADE20K
(DINOv2-B $50.58$ vs $49.10$; single seed). What this section establishes is what
\emph{predicts} that sign-flip, and hence what the gate should read.

\paragraph{What the gate reads: confident-set reliability, not raw saturation}
It is tempting to attribute the ADE result to an \emph{under-confident} teacher
that preserves the confidence spread adaptive rules were built to exploit. Direct
measurement refutes that. \Cref{tab:gate} reports each strict teacher's confidence
geometry, measured forward-only on held-out labels (Sec.~\ref{sec:gate}): all six teachers are \emph{highly
saturated} (${\ge}82\%$ of pixels at $c{\ge}0.95$; ADE is only ${\sim}13$~points
below Pascal), so under-confidence cannot explain the ADE win. What separates the
regimes is a sharper statistic, $\pi_{\mathrm{kept}}\!=\!\Pr[\text{correct}\mid
c{\ge}0.95]$, the \emph{reliability of the set strict keeps}. On Pascal and
Cityscapes $\pi_{\mathrm{kept}}{\approx}98\%$: the confident predictions are almost
all right, so hard-training on them is optimal and lowering the cutoff only admits
the error-enriched band (Sec.~\ref{sec:anatomy}). On ADE
$\pi_{\mathrm{kept}}{\approx}89\%$: the teacher is \emph{confidently wrong} on
roughly one confident pixel in nine, so hard-thresholding trusts those errors at
full weight and the floor's confidence-weighted, softened loss mitigates them.
This is exactly the gate criterion of Sec.~\ref{sec:gate}: \emph{use strict when
the retained set is at least as accurate as the confidence demanded
($\pi_{\mathrm{kept}}{\ge}\tau$), and engage the adaptive floor otherwise.} The
boundary is the operating threshold $\tau{=}0.95$ itself, a calibration criterion,
not a value tuned to mIoU, and \Cref{fig:gate} shows it makes the correct
strict-vs-floor call on all six teachers \emph{blind}, from confidence and
held-out labels alone.

\paragraph{Scope of this claim} $\pi_{\mathrm{kept}}$ is a property of the
teacher's calibration, not of DINOv2 in particular, so the gate is in principle
backbone-agnostic; but every teacher we can test is DINOv2-family, so we claim the
demonstration only for \emph{DINOv2-family teachers} (Sec.~\ref{sec:limitations}),
and with six teachers we can show the criterion \emph{separates} the regimes, not
pin its exact boundary. The ADE win is single-seed: the \emph{sign} (no collapse;
floor ${\ge}$ strict) reproduces across S and B, but its magnitude is within
plausible seed noise, and we do not rest the paper on it. The floor, CW-BASS~v2's
own mechanism, is \emph{worst} on the reliable Pascal teacher yet \emph{best} on
the unreliable ADE teacher: regime-appropriate, which is what the gate exploits.
(DINOv2-L is a partial cross-check: it confirms the Pascal loss in full and its
L-ADE dynamic run reaches $52.00$ with no collapse, but strict L-ADE exceeded GPU
memory and L-Cityscapes was not run, so we claim no signed L gap off Pascal.)

\begin{table}[t]
\centering
\caption{\textbf{Generality across backbone scale and dataset.} Best EMA mIoU at
matched batch~$16$, seed~$0$; Enc.\ = DINOv2-\{S,B,L\} (DINOv2-B Pascal~1/8
three-seed mean $86.19{\pm}1.82$, \Cref{tab:multiseed}). Strict beats every
adaptive rule on Pascal at all scales, the rules tie on Cityscapes, and the floor
edges ahead on the \emph{confidently-unreliable} ADE20K teacher (the separator is
$\pi_{\mathrm{kept}}$, \Cref{tab:gate}; ADE single seed). Per-class was run only at
DINOv2-B. \textbf{Best} per row is bold, \emph{second} is italic.}
\label{tab:datasets}
\footnotesize
\setlength{\tabcolsep}{4pt}
\begin{tabular}{llcccc}
\toprule
Enc. & Dataset (1/8, b16) & Strict & Dynamic & Floor & Per-cls \\
\midrule
\multirow{3}{*}{S}
 & Pascal~VOC  & $\mathbf{85.00}$ & $\mathit{82.00}$ & $80.48$ & --- \\
 & Cityscapes  & $\mathit{81.51}$ & $\mathbf{81.86}$ & $81.16$ & --- \\
 & ADE20K      & $44.02$ & $\mathbf{44.92}$ & $\mathit{44.69}$ & --- \\
\midrule
\multirow{3}{*}{B}
 & Pascal~VOC  & $\mathbf{87.40}$ & $\mathit{84.07}$ & $82.32$ & $\mathit{84.07}$ \\
 & Cityscapes  & $\mathbf{83.96}$ & $83.43$ & $\mathit{83.47}$ & $83.40$ \\
 & ADE20K      & $49.10$ & $\mathit{49.67}$ & $\mathbf{50.58}$ & $49.25$ \\
\midrule
\multirow{2}{*}{L}
 & Pascal~VOC        & $\mathbf{86.91}$ & $\mathit{84.88}$ & $83.50$ & --- \\
 & ADE20K$^{\dagger}$ & --- & $52.00$ & --- & --- \\
\bottomrule
\end{tabular}
\\[2pt]
{\footnotesize $^{\dagger}$Partial: strict L-ADE OOM'd and L-Cityscapes was not
run; the L-ADE dynamic rule at $52.00$ shows no collapse, matching the
unreliable-teacher behaviour at S and B, but with no strict arm we claim no
signed L-ADE gap.}
\end{table}

\begin{table}[t]
\centering
\caption{\textbf{The gate, measured blind on six strict teachers.} Confidence
geometry of each converged strict teacher, measured forward-only on that
dataset's validation split ($200$ images; see Sec.~\ref{sec:gate} for why the
validation split rather than $\mathcal{L}_{\mathrm{cal}}$, and what that does and
does not show). Values in \%: saturation $S{=}\Pr[c{\ge}0.95]$
and reliability $\pi_{\mathrm{kept}}{=}\Pr[\text{correct}\mid c{\ge}0.95]$. All
teachers are saturated; what tracks the sign of the adaptive-vs-strict gap
($\Delta{=}$\,floor$-$strict mIoU) is $\pi_{\mathrm{kept}}$, \emph{not} $S$. The
gate ($\pi_{\mathrm{kept}}{\ge}\tau{\Rightarrow}$\,strict, else adaptive;
$\tau{=}0.95$) makes the correct call on every teacher without seeing $\Delta$.
The gate is binary: it does not rank the dynamic and per-class rules, which on
Cityscapes-S and ADE-S edge past the floor (\Cref{tab:datasets}).}
\label{tab:gate}
\footnotesize
\setlength{\tabcolsep}{4.5pt}
\begin{tabular}{llcccc}
\toprule
Teacher & Enc. & $S{\ge}0.95$ & $\pi_{\mathrm{kept}}$ & gate & $\Delta$ \\
\midrule
Pascal~VOC & B & $97.8$ & $98.5$ & strict & $-5.08$ \\
Pascal~VOC & S & $96.7$ & $98.4$ & strict & $-4.52$ \\
Cityscapes & B & $92.4$ & $98.4$ & strict & $-0.49$ \\
Cityscapes & S & $92.0$ & $98.1$ & strict & $-0.35$ \\
ADE20K     & B & $85.0$ & $\mathbf{89.3}$ & \textbf{adaptive} & $\mathbf{+1.48}$ \\
ADE20K     & S & $82.1$ & $\mathbf{89.2}$ & \textbf{adaptive} & $\mathbf{+0.67}$ \\
\bottomrule
\end{tabular}
\end{table}

\begin{figure}[!htb]
\centering
\includegraphics[width=0.85\linewidth]{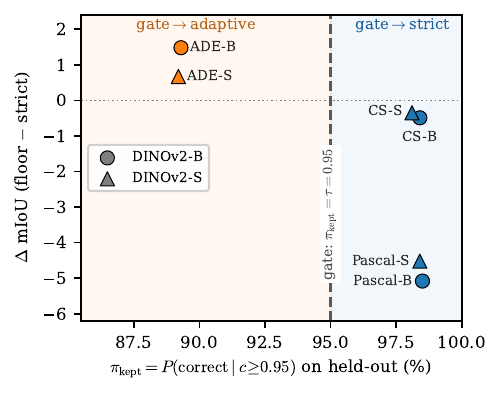}
\caption{\textbf{The reliability gate, demonstrated blind.} Adaptive-vs-strict gap
($\Delta{=}$\,floor$-$strict mIoU) against $\pi_{\mathrm{kept}}$, measured one-pass
on held-out labels with no access to $\Delta$. All six strict teachers fall on the
correct side of the boundary $\pi_{\mathrm{kept}}{=}\tau{=}0.95$: where the
confident set is reliable (right; Pascal/Cityscapes) adaptive filtering does not
help; where it is unreliable (left; ADE20K) the floor is competitive. ADE single
seed.}
\label{fig:gate}
\end{figure}

\subsection{Qualitative Results}
\label{sec:qualitative}

\Cref{fig:qualitative} contrasts strict and CW-BASS~v2's adaptive rule on two
\emph{favourable} validation images per dataset, chosen by per-image mIoU
advantage for the adaptive rule (the per-class rule on Pascal, the self-adaptive
floor on Cityscapes/ADE20K). These are deliberately selected cases; what they
illustrate is that the visual contrast tracks confident-set \emph{reliability},
not any single image. The Pascal rows make the point against us and we leave them
that way: even chosen for the adaptive rule's benefit, its maps carry visible
speckle that strict does not, and strict leads \emph{in aggregate} besides. The
audit rests on the aggregate (\Cref{tab:negative,tab:tail}). On Cityscapes the maps are
near-identical, the visual signature of the aggregate near-tie. On the
confidently-unreliable ADE20K teacher (${\sim}89\%$ reliable, \Cref{tab:gate}) the
floor recovers large regions strict mislabels, and, unlike Pascal, these
per-image wins run in the \emph{same} direction as the aggregate (floor $50.58$
vs.\ strict $49.10$, single seed; Sec.~\ref{sec:generality}).

\begin{figure*}[!tb]
\centering
\IfFileExists{figures/qualitative_combined.pdf}{%
  \includegraphics[width=0.55\textwidth]{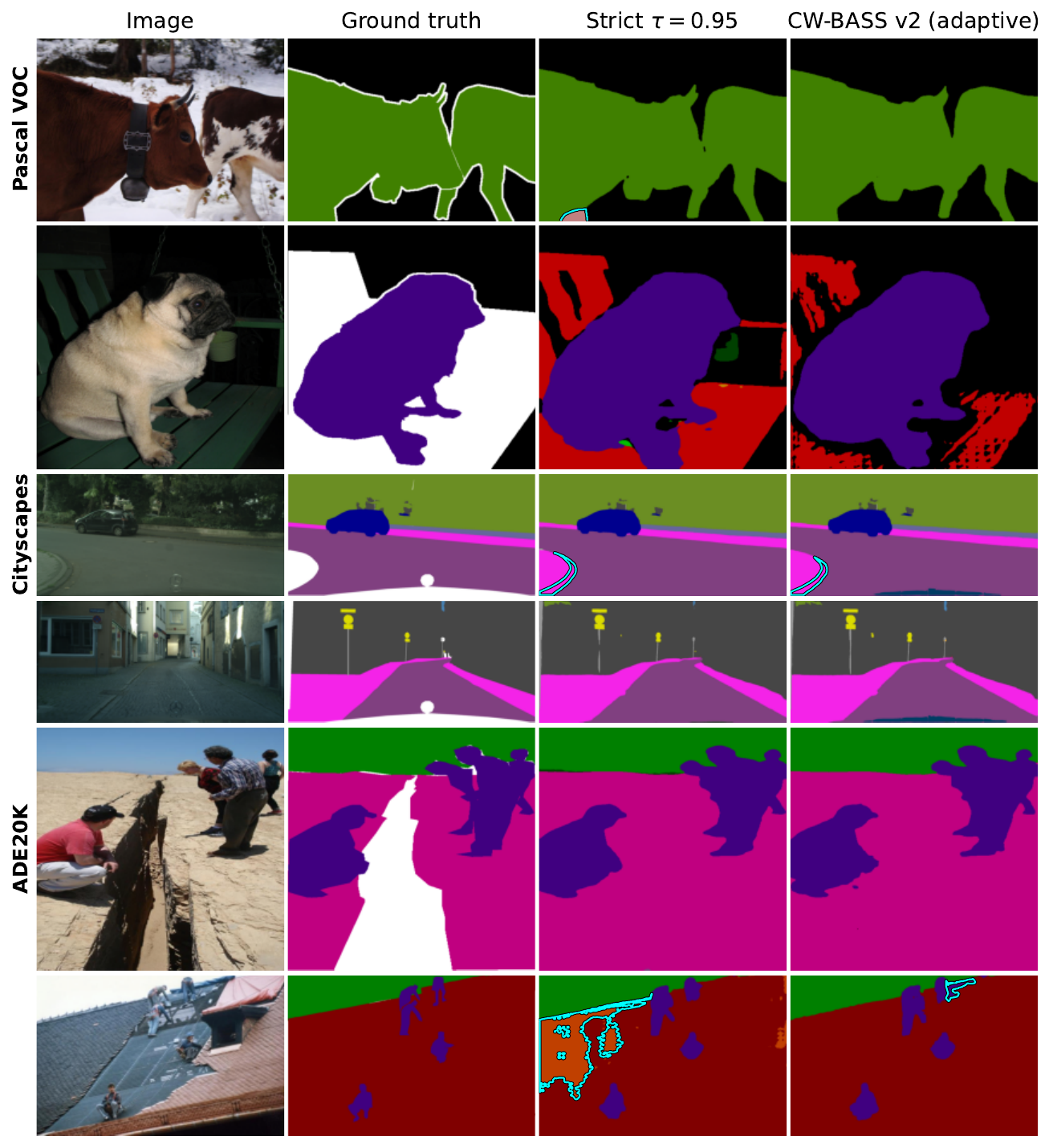}%
}{%
  \fbox{\parbox{0.95\linewidth}{\centering\color{gray}\vspace{1.6cm}
  \textbf{Figure placeholder} — combined qualitative grid. Generate with
  \texttt{PYTHONPATH=. python scripts/plot\_qualitative\_combined.py}.\vspace{1.6cm}}}%
}
\caption{\textbf{Qualitative comparison across the three teachers.} (DINOv2-Base,
EMA teachers; two \emph{favourable} validation rows per dataset, selected by
per-image mIoU advantage for the adaptive rule.) Columns: input, ground truth,
strict $\tau{=}0.95$, and CW-BASS~v2's adaptive rule (per-class on Pascal, the
self-adaptive floor on Cityscapes/ADE20K); \textcolor{cyan!70!black}{cyan}
contours mark large prediction-vs-GT error regions (cyan rather than red: the
Pascal and ADE20K palettes both contain a saturated red). Row heights are
equalised, so panels are mildly stretched. Read these as illustration, not
evidence: the aggregate verdicts are in \Cref{tab:negative,tab:tail,tab:datasets}.
Even on rows chosen to favour it, the adaptive rule is visibly noisier than
strict on the reliable Pascal teacher (row~2, scattered speckle); the two are
near-indistinguishable at the Cityscapes tie; and on the confidently-unreliable
ADE20K teacher the floor removes a large mislabeled region strict introduces
(row~6), the one dataset where its per-image wins run with the aggregate
(single seed).}
\label{fig:qualitative}
\end{figure*}

\subsection{Ablations}
\label{sec:ablation}

\Cref{tab:ablation} adds the CW-BASS mechanisms cumulatively on Pascal~VOC 1/8
(DINOv2-Base) in a reduced-scale single-seed study, which isolates the
\emph{within-adaptive-family} contributions that the full-scale
\Cref{tab:negative} cannot (since there the whole family collapses). Two audit
findings stand out. First, the boundary auxiliary, which clearly helped the
original ResNet-era CW-BASS~\cite{cwbass}, does \emph{not} transfer to DINOv2
($-1.9$~mIoU here): its benefit is backbone-dependent. Confidence weighting
gives a modest $+1.2$, and held-out calibration is accuracy-neutral ($-0.5$;
its value is the unbiased noise estimate, not mIoU; Sec.~\ref{sec:negative}).
Second, within the dynamic-threshold family the floor is the largest single
effect ($+2.7$), consistent with its stability role (Sec.~\ref{sec:analysis}).
The crucial caveat (and the reason this ablation is a supporting study, not
the headline) is that this entire ladder lives \emph{below} the strict fixed
threshold: the best configuration here ($78.8$, reduced scale) and the
floor-stabilised matched-batch~$16$ run ($82.32$, \Cref{tab:negative}) fall
$\sim\!8$ and $\sim\!5$~mIoU short of the strict baseline's $87.4$
respectively. Optimising within the
adaptive family does not change the conclusion that the family is the wrong
choice.

\begin{table}[t]
\centering
\caption{\textbf{Cumulative ablation on Pascal~VOC 1/8.} (DINOv2-Base,
reduced-scale, single seed.) CWLoss = confidence-weighted CE; Bnd = boundary auxiliary;
Floor = self-adaptive floor; Calib = held-out calibration. The boundary
auxiliary, which helped at ResNet strength, does \emph{not} transfer ($-1.9$); the
floor is the largest within-family effect ($+2.7$); but the entire ladder sits
below strict (\Cref{tab:negative}), and optimising within the adaptive family does
not change which family wins.}
\label{tab:ablation}
\footnotesize
\begin{tabular}{ccccc}
\toprule
CWLoss & Bnd & Floor & Calib & mIoU \\
\midrule
           &            &            &            & 76.75 \\
\checkmark &            &            &            & 77.93 \\
\checkmark & \checkmark &            &            & 76.08 \\
\checkmark & \checkmark & \checkmark &            & \textbf{78.76} \\
\checkmark & \checkmark & \checkmark & \checkmark & 78.28 \\
\bottomrule
\end{tabular}
\end{table}

\section{Analysis: Why the Gate}
\label{sec:negative}

This analysis is the evidence CW-BASS~v2's gate is built on: it establishes
\emph{when} adaptive filtering fails and the mechanism that makes the failure
predictable, so the method can select against it. That a strict high threshold
beats coverage-seeking filtering on a DINOv2 teacher is, on its own, not new:
UniMatch~V2~\cite{unimatchv2} already established it, and we \emph{confirm} rather
than discover it. Our contribution here is not the ranking but its
\emph{explanation}: a batch-matched decomposition of \emph{why} it holds,
measured link by link, from which the gate's reliability criterion follows.
On
the identical DINOv2-Base backbone we compare four settings: (i)~the strict fixed
threshold $\tau{=}0.95$ of UniMatch~V2~\cite{unimatchv2}; (ii)~the original
CW-BASS dynamic global threshold (\Cref{eq:dyn}); (iii)~that dynamic threshold
lower-bounded by the self-adaptive floor and informed by held-out calibration
(the full \texttt{cwbass\_v2} recipe); and (iv)~the per-class adaptive scheme
of~Sec.~\ref{sec:perclass} (\texttt{class\_adaptive}). Backbone, decoder,
optimiser, augmentation, EMA teacher, crop, batch and splits are held fixed;
rules (ii)--(iv) differ from one another in the rule alone, while (i) is the
UniMatch~V2 recipe and also differs in its unlabeled-loss form
(Sec.~\ref{sec:whatiscontrolled}).

\paragraph{Finding}
On a \emph{full-scale, three-seed} comparison at the official batch~$16$, strict
included (\Cref{tab:multiseed}), the strict fixed threshold leads every adaptive
rule \emph{on average} and is the only rule that ever reaches the UniMatch~V2
operating point. Averaged over three seeds it reaches $86.19\pm1.82$~mIoU on
Pascal~VOC 1/8 and attains the ${\sim}87.4$ mode on two of three seeds, while all
\emph{five} adaptive rules sit below it in a $79$--$85$ band. The comparison
turns on strict's \emph{upside}, not on a clean sweep of the seed grid, and we
are precise about which is which. Across the fifteen adaptive runs not one
reaches strict's mode: the best adaptive result on any seed is $84.52$, a hard
ceiling $2.9$ below strict's $87.40$. But strict's own seed variance is real and
it costs it one head-to-head: on seed~$1$ strict stalls at $84.09$ and finishes
\emph{below} the dynamic rule's $84.46$ on that seed. That is the single cell in
the $3\times6$ grid where an adaptive rule beats strict, and it is a comparison
between a stalled strict run and the strongest adaptive rule; we report it rather
than claim a clean sweep. The margin over the closest rule (dynamic, $+1.84$) is
within strict's seed spread and, by a two-sample Welch $t$-test at three seeds,
not significant ($p{=}0.22$), as are the margins over per-class ($p{=}0.14$) and
SoftMatch ($p{=}0.08$); only the margins over the floor ($+4.3$, $p{=}0.044$) and
FreeMatch ($+6.8$, $p{=}0.017$) clear $0.05$. FreeMatch and SoftMatch, run by
\emph{their own} definitions, fail the same way as our reconstructions, so the
result is not an artefact of the CW-BASS formula.

At matched batch~$16$ each adaptive rule reaches its best EMA checkpoint early (FreeMatch at
epoch~$1$, the floor and per-class rules within the first few epochs) and then drifts down,
versus strict's late peak (epochs $32$--$42$ on its climbing seeds; \Cref{tab:negative},
\Cref{fig:trajectory}). On the single seed dissected in \Cref{tab:negative}, the per-class
rule collapses hardest, and the collapse is not confined to the student: the student sheds
$6.0$~mIoU and the EMA teacher $6.14$, so the slow EMA does \emph{not} damp the fall here,
it merely delays it. A note on strict's variance: its three
seeds are $87.40/84.09/87.08$, so two climb to the SOTA mode and one stalls early at $84.09$;
the stall is genuine (the seeds diverge by epoch~$8$, \Cref{fig:multiseed}), and we report it
rather than hide it. It means strict's \emph{margin} is seed-dependent, and on that one seed
its \emph{ranking} is too.

\paragraph{Unequal run lengths}
Runs were trained with early stopping (patience $20$ epochs on EMA mIoU) inside a
$60$-epoch budget, so they terminate at different epochs: strict seed~$0$ ran the
full $60$, the dynamic rule stopped at $51$, the floor and per-class rules at
$24$, and strict seeds~$1$/$2$ at $28$/$52$. This does not affect any best-EMA
number (every run had run $20$ epochs past its peak by construction) but it does
mean the ``final'' column of \Cref{tab:negative} is each run's last trained epoch
rather than a common epoch, and it is why the trajectories in
\Cref{fig:trajectory,fig:multiseed} end at different points. The early stopping
is part of the training protocol, not a post-hoc analysis choice.

\begin{figure}[!htb]
\centering
\includegraphics[width=0.85\linewidth]{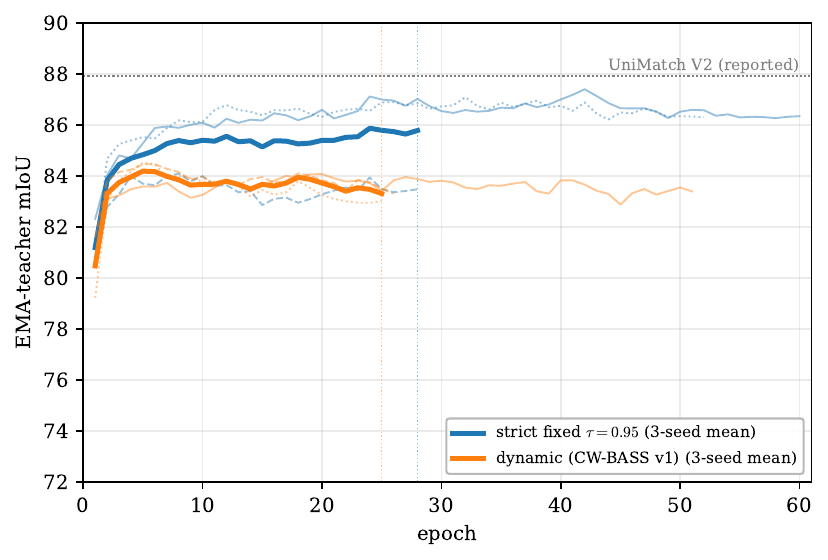}
\caption{\textbf{The decisive control, visualized.} EMA-teacher mIoU over training
(Pascal~VOC 1/8, DINOv2-Base; thin lines per seed, bold the 3-seed mean). Strict
(blue) separates from the dynamic band (orange) by epoch~$8$ on \emph{two} of its
three seeds and holds that gap for the rest of training. The third is the
exception the mean has to carry: strict's stalled seed~$1$ (dashed blue) drops
into the orange band early and stays inside it, finishing below the best dynamic
seed --- visible here as the one blue line that never leaves the cluster. The two
bold means are computed only over the \emph{all-live} window, the epochs every one
of the three seeds logged (dotted vertical lines), matching the seeds' different
stopping points (\Cref{tab:multiseed}); thin per-seed lines continue past that
window where a seed ran longer.}
\label{fig:multiseed}
\end{figure}

\begin{table}[t]
\centering
\caption{\textbf{The decisive control: three-seed comparison at matched
batch~$16$} (Pascal~VOC 1/8, DINOv2-Base; best EMA mIoU per seed, mean$\pm$std;
(v1)/(v2) mark the CW-BASS generations). Every adaptive rule sits below strict
\emph{on average}, and none reaches strict's ${\sim}87.4$ mode (best adaptive:
$84.52$). The one head-to-head strict loses is seed~$1$, where its stalled run
($84.09$) falls below the dynamic rule's $84.46$ (underlined). Welch $t$-tests at
three seeds: the margins over the floor ($p{=}0.044$) and FreeMatch ($p{=}0.017$)
are significant; those over dynamic, per-class and SoftMatch are not
($p{=}0.22/0.14/0.08$). \textbf{Best} is bold, \emph{second} is italic (mean).}
\label{tab:multiseed}
\footnotesize
\setlength{\tabcolsep}{3pt}
\begin{tabular}{lcccc}
\toprule
Threshold rule & seed 0 & seed 1 & seed 2 & mean$\pm$std \\
\midrule
Strict fixed $\tau{=}0.95$ \cite{unimatchv2} & $87.40$ & $84.09$ & $87.08$ & $\mathbf{86.19\pm1.82}$ \\
Dynamic (v1)                    & $84.07$ & $\underline{84.46}$ & $84.52$ & $\mathit{84.35}\pm0.24$ \\
Per-class adaptive (v2)         & $84.07$ & $83.28$ & $84.02$ & $83.79\pm0.43$ \\
SoftMatch~\cite{softmatch}      & $83.26$ & $83.47$ & $82.24$ & $82.99\pm0.66$ \\
Dynamic\,+\,floor\,+\,calib (v2) & $82.32$ & $81.23$ & $82.10$ & $81.88\pm0.57$ \\
FreeMatch~\cite{freematch}      & $79.10$ & $79.95$ & $79.02$ & $79.36\pm0.51$ \\
\bottomrule
\end{tabular}
\end{table}

The same early-peak signature recurs, more violently, at small batch: at batch~$4$
the three adaptive rules peak within $1$--$5$ epochs \emph{below $80.5$} and the
raw student then falls steeply (\Cref{tab:negative}, lower blocks). We read these
runs \emph{qualitatively} only: they mix batch sizes with the strict baseline and
were cut at a compute deadline (Sec.~\ref{sec:negative-confound}), so we claim from
them just the \emph{direction} (adaptive loses, peaks early, degrades harder when
each noisy update carries more weight), not a numeric severity gradient. The
early-peak-then-decline shape is the fingerprint of confirmation
bias~\cite{arazo2020pseudo} under an over-permissive pseudo-label stream
(\Cref{cor:collapse}); the quantitative claim rests on the matched batch~$16$
comparison above.

\begin{table}[t]
\centering
\caption{\textbf{At matched batch, every adaptive threshold loses to strict, and
the floor loses worst.} Pascal~VOC mIoU (DINOv2-Base). Rows 2--4 differ from one
another in the threshold rule alone; row~1 is the UniMatch~V2 recipe
(Sec.~\ref{sec:whatiscontrolled}).
``best EMA (ep)'' is the best EMA mIoU and its epoch, ``EMA/raw drop'' the
best-to-final decline of the EMA teacher and of the student. \emph{Top block}:
matched batch~$16$ (the controlled comparison), strict climbs to $87.40$
(ep~$42$) while adaptive rules peak early (ep~$4$--$20$) then decline, the
per-class model shedding $6.14$~mIoU. Single-seed trajectories; three-seed bests are in \Cref{tab:multiseed}, and the
batch-$4$ runs, which we read qualitatively only, are in the supplementary
material (\Cref{tab:negative_b4}).}
\label{tab:negative}
\footnotesize
\setlength{\tabcolsep}{2.4pt}
\begin{tabular}{llccc}
\toprule
Threshold rule & batch & best EMA (ep) & EMA drop & raw drop \\
\midrule
\multicolumn{5}{l}{\emph{Pascal~VOC 1/8 (183 labels), matched batch~16:}}\\
Strict fixed $\tau{=}0.95$ \cite{unimatchv2} & 16 & $\mathbf{87.40}$ (42) & $1.06$ & $1.8$ \\
Dynamic (CW-BASS)        & 16 & $84.07$ (20) & $0.68$ & $1.4$ \\
Dynamic + floor + calib  & 16 & $82.32$ (4)  & $2.24$ & $2.4$ \\
Per-class adaptive       & 16 & $84.07$ (4)  & $\mathbf{6.14}$ & $6.0$ \\

\bottomrule
\end{tabular}
\end{table}

A note on the degradation columns. Because the EMA teacher moves slowly it can lag a
collapsing student, so we report the fall of each. The \emph{EMA drop} is the evaluation
model degrading from its own best to its final epoch: benign for strict ($1.06$) but a
striking $6.14$ for the per-class rule ($84.07{\to}77.93$). That $6.14$ is the sharpest
number in the table: the EMA teacher not only fails to damp the student's $6.0$ fall, it
exceeds it, so it is the actual evaluation model that collapses --- the direct signature of
``early peak then decline''. The \emph{raw drop}
reports the underlying student's peak-to-final fall, mild at batch~$16$ but severe at
batch~$4$ (\Cref{tab:negative_b4}, supplementary material).

The dynamic and per-class rules report the same best EMA to two decimals ($84.07$)
by coincidence, not duplication: the underlying values are $84.071$ (dynamic, epoch~$20$)
and $84.068$ (per-class, epoch~$4$), two distinct runs that peak at nearly the same height
by different routes, the per-class rule far earlier, then collapsing far harder.

\subsection{What Else Could Explain the Gap}
\label{sec:negative-confound}
Three alternative explanations deserve to be ruled out or bounded: batch size,
the strict arm's different unlabeled loss, and the calibration split.

\paragraph{Not the batch}
The primary, fully on-disk control is the top block of
\Cref{tab:negative}: all four runs share the \emph{same} batch~$16$, and strict
beats dynamic by $3.3$ and the floor by $5.1$~mIoU, with the same
early-peak-then-decline versus monotone-climb trajectory difference, so the gap
is not a batch artefact. Batch size appears to control the
\emph{severity} of the collapse, the adaptive rules fall harder at batch~$4$
(\Cref{tab:negative}, lower blocks), but we report that only \emph{qualitatively}:
the batch-$4$ runs mix batch sizes relative to the strict baseline and were cut at
a compute deadline, so we do not attach a numeric severity gradient to them. The
claim we do rest on is batch-invariant and fully batch-matched: at the official
batch~$16$, confidence-adaptive thresholds lose to a strict cutoff at
foundation-model strength, and the floor does not rescue them.

\paragraph{Not (mostly) the loss form or the calibration split}
Strict is the UniMatch~V2 recipe and so also differs from the adaptive arms in
its unlabeled loss, and two adaptive rules hold out $5\%$ of labels
(Sec.~\ref{sec:whatiscontrolled}). Both confounds are measured, and neither is
close to large enough. In the cumulative ablation (\Cref{tab:ablation}) the two
CW-BASS loss terms together move accuracy by $-0.7$ ($76.75$ without them,
$76.08$ with both), and the calibration split by $-0.5$ (\Cref{tab:alpha}); at
face value they account for ${\approx}1.2$ of the $3.3$--$5.1$~mIoU gap, leaving
at least $2$~mIoU that they cannot explain. The stronger argument is that the
rule alone demonstrably moves accuracy by \emph{more} than the whole
strict-vs-adaptive gap: among the five adaptive rules, which share one loop, one
loss and one config and differ in nothing but the rule, three-seed means span
$79.36$ (FreeMatch) to $84.35$ (dynamic), a $5.0$~mIoU spread
(\Cref{tab:multiseed}). A selection rule is thus an order-of-magnitude-larger
lever than the recipe difference that separates strict from the family. What we
cannot bound from our runs is the dual-strong-view difference, which we do not
ablate; a drop-in strict arm inside the CW-BASS~v2 loop would settle it and is
the first item of future work (Sec.~\ref{sec:limitations}).

\paragraph{Is it the stale constants?}
A natural objection is that the dynamic rule loses only because it is run at its
original ResNet-era constants ($\tau_0{=}0.6$, $\beta{=}0.5$, $\tau_{\min}{=}0.3$),
which by construction ceiling its cutoff near $0.34$ (\Cref{cor:collapse}), and a
re-tuned rule might close the gap. A parameter sweep on Pascal~VOC 1/8 at
batch~$16$ (supplementary material, \Cref{tab:taus}) rules this out: raising the
lower clamp $\tau_{\min}$ through $0.5$, $0.7$, $0.9$, or the base $\tau_0$ to
$1.2$/$1.7$, leaves the rule in the same $81$--$85$ band, still peaking early
(epochs $2$--$9$), and never within $2.7$~mIoU of strict. The one setting that
would close the gap is the degenerate one: at $\tau_{\min}{=}0.95$ the clamp
pins the cutoff at the strict threshold and the rule \emph{is} the strict rule,
with no adaptive range left, which is an algebraic identity rather than a cell we
ran. The deficit is therefore \emph{not} an artefact of a strawman
parameterisation: across the whole sweep no configuration retaining any adaptive
range comes close to strict. This is
the central point sharpened: once the confidence range has collapsed
(Sec.~\ref{sec:anatomy}), there is no setting of an adapt-downward rule that beats
a fixed high cutoff, because any room left to adapt downward is room to admit the
error-enriched band beneath the saturated mass.

\subsection{Anatomy of the Collapse}
\label{sec:anatomy}

The failure is not a tuning artefact; it is a causal chain that any
confidence-adaptive rule triggers under a foundation-model teacher
(schematised in \Cref{fig:teaser}).
We walk it link by link. Every quantity below
is read off the TensorBoard logs of the matched-batch runs, not asserted, and the
chain compresses to one measured line:

\keybox{\textbf{The measured chain.}\enspace
\emph{Confidence saturation} ($98\%$ of pixels $\ge0.95$, ECE $0.007$)
$\Rightarrow$ \emph{dynamic-range collapse} (the dynamic cutoff pinned in
$[0.300,0.331]$, directly beneath the analytic ceiling
$\tau_0\sigma(\beta/2)\approx0.34$ of \Cref{cor:collapse})
$\Rightarrow$ \emph{mask flooding} (retention crosses $0.95$ by epoch~$7$ and
$0.99$ by epoch~$8$, reaching $1.000$; $99.9\%$ of teacher errors admitted
vs.\ $63\%$ for strict)
$\Rightarrow$ \emph{early peak} (best EMA at epochs~$4$--$20$ for the
adaptive rules vs.\ epoch~$42$ for strict)
$\Rightarrow$ \emph{decline} (the per-class evaluation model itself sheds
$6.14$~mIoU, $84.07{\to}77.93$).\enspace
Strict $\tau{=}0.95$ breaks the chain at the second link and never enters it.}

\paragraph{Link 1: confidence saturation}
\Cref{fig:confhist} histograms the strict teacher's per-pixel max-softmax
confidence over the validation set. $98\%$ of valid pixels sit at confidence
$\ge0.95$: the distribution is pinned against~$1$. This is the regime
inversion. Adaptive thresholding (FlexMatch~\cite{flexmatch},
FreeMatch~\cite{freematch}) was built for the opposite case: a teacher with
many \emph{under}-confident-but-correct predictions that a high cutoff would
waste. Here there are almost none to recover.

\paragraph{Link 2: dynamic-range collapse, where the threshold stops
discriminating}
With the confidence mass piled at~$1$, a cutoff's job is decided entirely by
where it sits relative to that pile. \Cref{fig:mechanism}a is the direct
measurement: across the whole run the bare dynamic threshold stays pinned in
$[0.300,0.331]$ (right at the analytic ceiling $\tau_0\sigma(\beta/2)\approx0.34$
of \Cref{cor:collapse}) while the teacher's mean confidence climbs past
$0.88$. The gap is enormous, so essentially every pixel clears the
cutoff: \Cref{cor:collapse} made empirical. The per-class rule behaves the
same way at first (rising only to $0.805$); only the floor (\Cref{fig:mechanism}b)
forces the threshold to track confidence upward (\Cref{thm:floor}), climbing
$0.577\to0.922$, and even then it bounds retention at $\approx\!0.91$, still
admitting most of the (saturated) mass.

\paragraph{Link 3: mask flooding, and the admitted pixels are
disproportionately wrong}
A non-discriminative threshold retains nearly everything. The matched-batch~$16$
dynamic run starts at retention $\rho_t{=}0.569$, crosses $0.90$ and $0.95$ by
epoch~$7$, and reaches $0.99$ by epoch~$8$ ($\rho_t{\to}1.000$ thereafter);
the per-class rule crosses $0.90$ by epoch~$8$ (\Cref{tab:retention},
\Cref{fig:trajectory}b). The band the dynamic
rule admits but a strict cutoff rejects (confidence in $[0.33,0.95]$) is where
the teacher's \emph{errors} concentrate (the red mass in \Cref{fig:confhist}):
the dynamic threshold admits $99.9\%$ of all teacher errors versus $63\%$ for
$\tau{=}0.95$. Where the teacher is uncertain it is uncertain because it is
\emph{wrong}, not merely conservative, so the extra coverage is almost pure
noise. This is exactly the prediction of Proposition~\ref{prop:unbiased} (that
the per-class noise rate $\widehat\varepsilon_k$ should \emph{not} fall for the
classes whose thresholds the adaptive rule lowers), and our per-class IoU
evidence is consistent with it: the hard classes the rarity-scaled rule targets
do not improve (\Cref{tab:tail}; the per-class tail mean $58.6$ does not exceed
the bare dynamic threshold's $59.2$, a $0.6$~mIoU difference well inside the
seed spread of Sec.~\ref{sec:analysis}, i.e.\ a tie, so the targeted coverage
gain simply does not materialise). We lean on the measured per-class IoU rather
than a direct read of $\widehat\varepsilon_k$, whose trajectory we leave to
future work (Sec.~\ref{sec:analysis}).

\paragraph{Link 4: early peak, then decline}
Training then optimises against an almost-unfiltered, confidently-wrong
pseudo-label set from the very first epochs, before the teacher is good enough
to bootstrap. This is textbook confirmation bias~\cite{arazo2020pseudo}
compounded by the feature-distortion dynamic of
LP-FT~\cite{kumar2022finetuning}, whose finding that ``early stopping does not
mitigate feature distortion'' matches our curves precisely: at matched
batch~$16$ every adaptive run peaks within epochs~$4$--$20$ (the floor and
per-class rules within the first $4$) and then declines for the rest of
training, and at batch~$4$ the peak collapses to epochs~$1$--$5$
(\Cref{fig:trajectory}a). The strict threshold is the single rule that never
enters this chain, because its cutoff does not relax as the teacher becomes
confident; it keeps excising the error-enriched low-confidence band throughout.

\begin{table}[!tb]
\centering
\caption{\textbf{Mask flooding, measured.} (Matched batch~$16$, from the
TensorBoard logs.) ``first ep $\ge x$'' is the first epoch class-averaged
retention reaches $x$; ``$\rho$ final'' is the value at each run's last trained
epoch (runs early-stop at different epochs, Sec.~\ref{sec:negative}). The dynamic
rule floods (crosses $0.95$) by ep~$7$ and saturates at $1.000$; strict crosses
the same level three epochs later and stops at $0.989$, never reaching $0.99$;
the floor (\Cref{thm:floor}) never crosses $0.95$, the only adaptive rule
bounding retention away from~$1$. Note that strict too ends at high retention:
what distinguishes it is \emph{which} pixels it retains
(Sec.~\ref{sec:maskratio}), not how many.}
\label{tab:retention}
\footnotesize
\setlength{\tabcolsep}{3pt}
\resizebox{\columnwidth}{!}{%
\begin{tabular}{lccccc}
\toprule
Threshold rule & $\rho$ start & first ep $\ge0.90$ & first ep $\ge0.95$ & first ep $\ge0.99$ & $\rho$ final \\
\midrule
Strict fixed $\tau{=}0.95$ & ramp & 10 & 10 & --- & 0.989 \\
Dynamic (CW-BASS)          & 0.569 & 7 & 7 & 8 & 1.000 \\
Dynamic + floor + calib    & low   & 22\textsuperscript{\dag} & --- & --- & 0.914 \\
Per-class adaptive         & low   & 8 & --- & --- & 0.893 \\
\bottomrule
\end{tabular}}
\\[2pt]
{\footnotesize \textsuperscript{\dag}The floored run reaches only $\rho{=}0.914$
by epoch~$22$ and never crosses $0.95$, in agreement with
\Cref{thm:floor}.}
\end{table}

\begin{figure}[!htb]
\centering
\IfFileExists{figures/conf_hist.pdf}{%
  \includegraphics[width=0.75\linewidth]{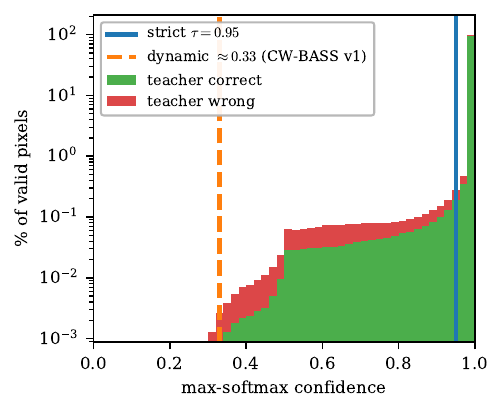}%
}{\fbox{\parbox{0.8\linewidth}{\centering\color{gray}\vspace{2cm}
  \textbf{Figure placeholder} — \texttt{scripts/plot\_confhist.py}.\vspace{2cm}}}}
\caption{\textbf{Confidence saturation: the chain's first link.} Per-pixel
max-softmax confidence of
the strict EMA teacher over Pascal~VOC val ($150$ images), split into pixels it
classifies correctly (green) vs incorrectly (red); log $y$-axis. $98\%$ of
pixels exceed $0.95$, and the teacher's errors concentrate in the
low/mid-confidence band that a strict cutoff rejects but a relaxed one keeps:
the dynamic threshold ($\approx0.33$) admits $99.9\%$ of all errors, the strict
$\tau{=}0.95$ only $63\%$. A threshold below the saturated mass keeps
everything, errors included; this is what disables every adaptive rule
downstream.}
\label{fig:confhist}
\end{figure}

\begin{figure}[!htb]
\centering
\IfFileExists{figures/mechanism.pdf}{%
  \includegraphics[width=0.66\linewidth]{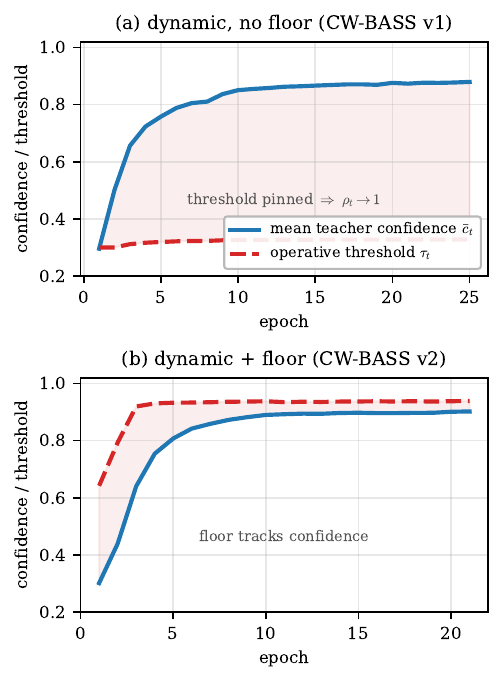}%
}{\fbox{\parbox{0.9\linewidth}{\centering\color{gray}\vspace{2cm}
  \textbf{Figure placeholder} — \texttt{scripts/plot\_mechanism.py}.\vspace{2cm}}}}
\caption{\textbf{Dynamic-range collapse, measured.} (Pascal~VOC 1/8, DINOv2-Base,
class-averaged.) \textbf{(a)} The bare dynamic threshold (red) stays pinned near
its lower clamp while the teacher's mean confidence (blue) climbs past $0.88$; the
shaded gap is confidence mass admitted indiscriminately (\Cref{cor:collapse}).
\textbf{(b)} The floor forces the operative threshold upward
($0.577\to0.922$, \Cref{thm:floor}) but only to a quantile still retaining
$\approx\!91\%$ of the mass, why it stabilises without recovering accuracy.}
\label{fig:mechanism}
\end{figure}

\paragraph{The per-class rule does not help the classes it targets}
The sharpest test of the per-class motivation is direct: does lowering a hard
class's threshold improve that class? \Cref{tab:tail} answers no. On the six
hardest Pascal classes, exactly those the rarity-scaled rule lowers thresholds
for, the per-class scheme merely \emph{ties} the global dynamic threshold at the
same batch~$4$ (tail mean $58.6$ vs $59.2$, inside seed noise): the targeted
coverage gain it is designed to deliver does not appear. This is the
batch-controlled core of the analysis: per-class buys nothing over the
simpler global rule, because at foundation strength a hard class's uncertainty
signals that the teacher is \emph{wrong}, not unfairly filtered
(Proposition~\ref{prop:unbiased}); relaxing the cutoff converts that uncertainty
straight into label noise. \Cref{fig:perclass} extends this to all $20$
foreground classes: strict (batch~$16$) matches or beats both adaptive rules
(batch~$4$) on \emph{every} one, the deficit widest on the cluttered indoor
classes (chair, diningtable, sofa, tvmonitor) where pseudo-label noise is most
damaging. That figure carries strict's batch advantage, so it shows the ceiling
rather than a matched contrast; at matched batch~$16$ and matched seed
(\Cref{tab:perclass_full}) strict's lead narrows to $18$ of $21$ classes, the
per-class rule taking \emph{bicycle}, \emph{train} and \emph{tvmonitor}.

\begin{figure*}[!tb]
\centering
\IfFileExists{figures/perclass_bars.pdf}{%
  \includegraphics[width=0.88\textwidth]{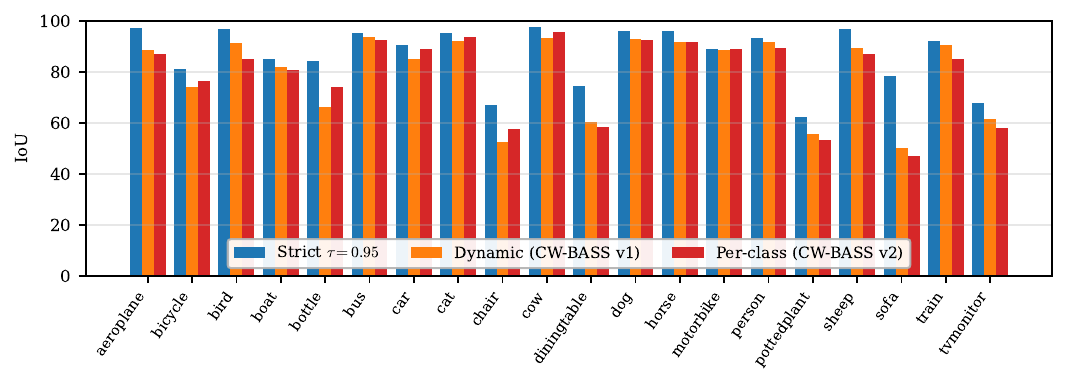}%
}{\fbox{\parbox{0.9\linewidth}{\centering\color{gray}\vspace{1.6cm}
  \textbf{Figure placeholder} — \texttt{scripts/plot\_mechanism.py}.\vspace{1.6cm}}}}
\caption{\textbf{Per-class IoU across all $20$ Pascal~VOC foreground classes.}
(1/8, DINOv2-Base, best EMA checkpoint.) The strict fixed threshold (blue, batch~$16$)
matches or exceeds the dynamic (orange) and per-class adaptive (red) rules
(both batch~$4$) on every class; the gap is largest on the hard indoor classes.
The batch-matched comparison here is dynamic vs.\ per-class: the two adaptive
rules are near-indistinguishable, underscoring that per-class adaptation adds
nothing over a global dynamic threshold at foundation strength; the strict
column carries a batch advantage and is included to show the ceiling, not as a
batch-matched contrast.}
\label{fig:perclass}
\end{figure*}

\begin{table}[t]
\centering
\caption{\textbf{Per-class IoU on the six hardest Pascal~VOC classes.} (1/8,
DINOv2-Base, best EMA.) The per-class rule lowers thresholds for exactly these classes, yet at
matched batch~$4$ it merely \emph{ties} the global dynamic threshold (tail mean
$58.6$ vs $59.2$, within seed noise): the targeted coverage gain does not appear.
Batch-matched comparison is \emph{Dynamic} vs \emph{Per-class} (both b4);
\emph{Strict} (b16) is shown for reference only.}
\label{tab:tail}
\footnotesize
\begin{tabular}{lccc}
\toprule
Class & Strict (b16) & Dynamic (b4) & Per-class (b4) \\
\midrule
bicycle      & $\mathbf{81.3}$ & 74.4 & 76.5 \\
chair        & $\mathbf{67.0}$ & 52.5 & 57.7 \\
diningtable  & $\mathbf{74.6}$ & 60.6 & 58.5 \\
pottedplant  & $\mathbf{62.4}$ & 55.7 & 53.5 \\
sofa         & $\mathbf{78.4}$ & 50.4 & 47.2 \\
tvmonitor    & $\mathbf{67.8}$ & 61.6 & 58.1 \\
\midrule
tail mean    & $\mathbf{71.9}$ & 59.2 & 58.6 \\
overall mIoU & $\mathbf{87.4}$ & 80.4 & 80.0 \\
\bottomrule
\end{tabular}
\end{table}

\Cref{tab:perclass_full} extends this to all $20$ foreground classes plus
background at matched batch~$16$ and matched seed: the deployed strict rule leads
in mean IoU and on $18$ of the $21$ classes. The per-class adaptive rule wins only
on \emph{bicycle} ($+0.5$), \emph{train} ($+1.6$) and \emph{tvmonitor} ($+4.3$),
and pays heavily where a lowered threshold admits confidently-wrong pixels in
cluttered indoor scenes (\emph{sofa} $-21.0$, \emph{chair} $-17.8$,
\emph{bottle} $-5.6$).

\begin{table*}[!tb]
\centering
\caption{\textbf{Per-class IoU on Pascal~VOC 1/8} (DINOv2-Base, best EMA
teacher, matched batch~$16$, \emph{both rows seed~$0$}). Deployed strict rule
vs.\ the per-class adaptive rule; strict leads in mean IoU and on $18$ of the
$21$ classes.}
\label{tab:perclass_full}
\scriptsize
\setlength{\tabcolsep}{3pt}
\resizebox{\textwidth}{!}{%
\begin{tabular}{l c c *{21}{c}}
\toprule
Rule & mIoU & & bg & aero & bike & bird & boat & bottle & bus & car & cat & chair & cow & table & dog & horse & mbike & person & plant & sheep & sofa & train & tv \\
\midrule
Strict $\tau{=}0.95$ (deployed) & \textbf{87.4} & & 96.8 & 97.2 & 81.3 & 97.0 & 85.1 & 84.3 & 95.4 & 90.7 & 95.4 & 67.0 & 97.6 & 74.6 & 96.2 & 96.3 & 89.0 & 93.6 & 62.4 & 97.1 & 78.4 & 92.2 & 67.8 \\
Per-class adaptive & 84.1 & & 96.1 & 95.0 & 81.8 & 94.6 & 84.4 & 78.8 & 93.0 & 87.6 & 94.4 & 49.2 & 95.5 & 71.6 & 93.6 & 95.0 & 84.8 & 92.0 & 61.9 & 92.8 & 57.4 & 93.8 & 72.1 \\
\bottomrule
\end{tabular}}
\end{table*}

\subsection{Mask-Ratio Diagnostic}
\label{sec:maskratio}

The full-scale trajectories (\Cref{fig:trajectory}) make the mechanism legible
and confirm \Cref{thm:floor}; read the retention panel~(b) alongside the
accuracy panel~(a). The \emph{dynamic} and \emph{per-class} rules drive
retention $\rho_t$ to $\approx\!1$ within a few epochs (\Cref{tab:retention}),
\emph{premature mask flooding}: they admit essentially every pixel \emph{before}
the teacher is accurate (\Cref{cor:collapse}), so the student trains on a
saturated, still-wrong pseudo-label set from the start and the accuracy panel
shows it collapsing right after its early peak. The \emph{floor} instead holds
$\rho_t$ bounded (never crossing $0.95$), in agreement with the fixed quantile
\Cref{thm:floor} predicts; the realized ceiling ($\approx\!0.91$) sits a little
below the nominal $s{=}0.95$ because the operative threshold is
$\max(\tau^{\mathrm{dyn}},\tau^{\mathrm{floor}})$. That bound \emph{reduces} the
late-training collapse but does not eliminate it (\Cref{tab:negative}): as
panel~(a) shows, the floor-stabilised run still tracks the other adaptive
variants down. Bounding retention does not undo the damage of having relaxed the
cutoff.

Strict is the instructive contrast, and it is worth being exact about what the
contrast is, because strict also ends at high retention ($\rho{=}0.989$): high
retention is plainly not \emph{sufficient} for collapse. Two differences remain.
Strict reaches the flood later (first crossing $0.90$ and $0.95$ at epoch~$10$,
against epoch~$7$ for dynamic) and, more importantly, it reaches it by a
different route: strict's cutoff never moves, so its retention can only rise
because the teacher's confidence rose, i.e.\ its high retention is a
\emph{consequence} of the teacher improving, whereas the dynamic rule's is a
consequence of the cutoff sitting beneath the mass from the first epoch. The
three-epoch difference is small, and we do not claim the timing alone explains
the outcome. What the trajectories do establish is the conjunction: the rules
that flood do so while their own cutoff is pinned near $0.33$, admitting
$99.9\%$ of teacher errors throughout, whereas strict admits $63\%$ at every
point on its ramp. It is the composition of retained pixels, not the retained
\emph{fraction}, that separates them.

\begin{figure*}[!tb]
\centering
\IfFileExists{figures/trajectory.pdf}{%
  \includegraphics[width=0.88\linewidth]{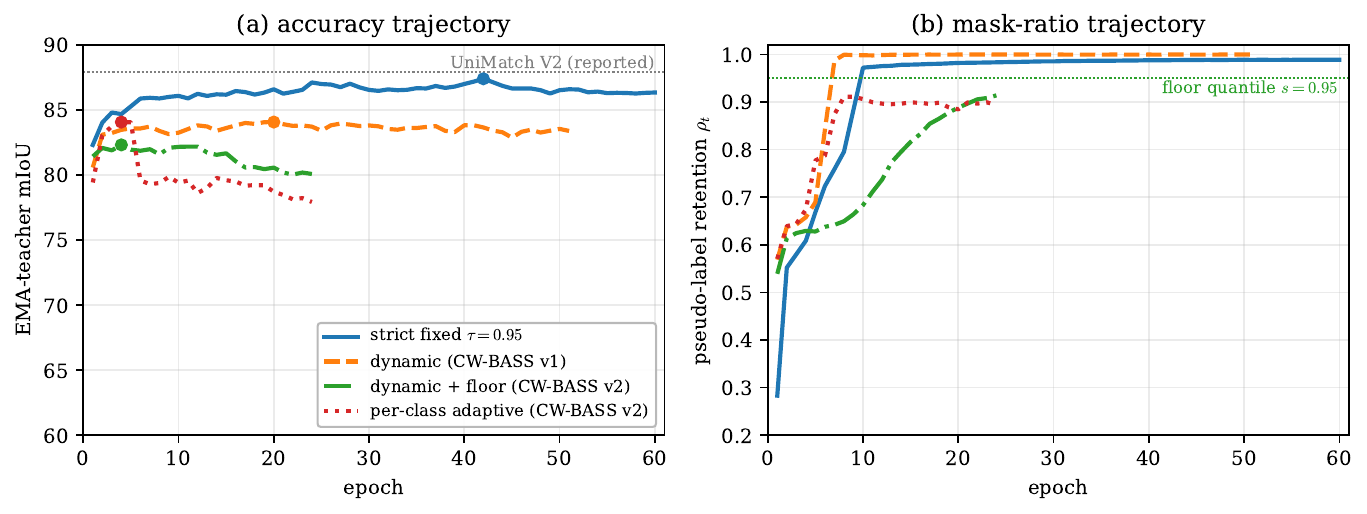}%
}{%
  \fbox{\parbox{0.9\linewidth}{\centering\color{gray}\vspace{2.2cm}
  \textbf{Figure placeholder} — generate with
  \texttt{python -m scripts.plot\_trajectories}.\vspace{2.2cm}}}%
}
\caption{\textbf{The saturation analysis in one figure: strict climbs, adaptive
peaks early.} Matched-batch trajectories on Pascal~VOC 1/8
(DINOv2-Base, all four rules at batch~$16$).
\textbf{(a)} EMA-teacher mIoU: strict climbs to $87.4$ and holds; every adaptive
rule peaks early (ep~$4$--$20$). What follows the peak differs by rule and we do
not lump them together: the per-class rule declines steeply ($-6.1$~mIoU EMA),
the floor moderately ($-2.2$), while the dynamic rule is essentially flat
($-0.7$) --- it fails by never climbing, not by falling.
\textbf{(b)} Retention $\rho_t$: the dynamic and per-class
rules flood toward $\approx\!1$ (dynamic crosses $0.95$ by ep~$7$,
\Cref{tab:retention}); the floor stays bounded ($\approx\!0.91$, \Cref{thm:floor});
strict reaches a comparable $0.989$ but three epochs later and while rejecting
the error-enriched band throughout (Sec.~\ref{sec:maskratio}). Curves end at
different epochs because training used early stopping (patience~$20$) inside the
$60$-epoch budget: strict ran to ep~$60$, the dynamic rule stopped at $51$, the
floor and per-class rules at $24$.}
\label{fig:trajectory}
\label{fig:maskratio}
\end{figure*}

\subsection{Floor Stability and Calibration Cost}
\label{sec:analysis}

The floor's role is narrow and important: it confirms the collapse
\emph{mechanism} by removing it, not the accuracy deficit. In a three-seed,
reduced-scale study (supplementary material, \Cref{tab:stability}), the bare
dynamic threshold degrades sharply after an early peak (mean drop $15.8$~mIoU,
seed spread $\pm7.3$); the same threshold lower-bounded by the floor
(\Cref{thm:floor}) arrests most of that degradation (mean drop $1.7$) and
collapses the seed spread to $\pm0.6$. This confirms \Cref{thm:floor,cor:collapse}:
the bare dynamic threshold is not merely suboptimal but \emph{unstable} once the
teacher saturates, and pinning retention to a fixed quantile restores stability.
\emph{But stability is not accuracy}: both rules sit far below the strict
threshold (\Cref{tab:negative}). The floor cures the \emph{instability} of a
confidence-blind rule but not its \emph{mismatch} to a saturated teacher: the
honest lesson is that one should not relax the threshold in the first place, and
the floor is the instrument that proves \emph{why} relaxing it does the damage.
A calibration-fraction sweep (supplementary material, \Cref{tab:alpha}) shows
accuracy is flat in $\alpha$, all four settings within $0.5$~mIoU, so holding out
a small labeled slice costs essentially nothing; the slice's value is not raw
accuracy but the \emph{unbiased} noise estimate of \Cref{prop:unbiased}, which
removes the self-confirming in-batch estimate by which falling noise could have
justified lowering thresholds, and whose prediction (that $\widehat\varepsilon_k$
does not fall where the rules lower thresholds) is supported by the per-class IoU
evidence (\Cref{tab:tail}).

\subsection{What This Means for the Method}
This synthesis closes the analysis: it is why CW-BASS~v2 \emph{gates} its adaptive
machinery rather than deploying it unconditionally. On a saturated teacher (the
Pascal regime, $98\%{\ge}0.95$, and the standard foundation-model benchmarks) the
method selects the strict fixed
threshold; it engages the floor only where it pays off, on a confidently-unreliable
teacher whose $\pi_{\mathrm{kept}}$ falls below the confidence it demands
(Sec.~\ref{sec:generality}). The floor and the
held-out calibration are not discarded under saturation, they remain the
instruments that make the gate's call legible: the floor as a provable stability
guarantee that isolates the collapse mechanism (Sec.~\ref{sec:maskratio}), and the
calibration as the unbiased diagnostic that reads the teacher's noise
(Sec.~\ref{sec:calibration}). The broader lesson the controlled comparison
establishes is that one should not spend effort re-tuning dynamic or per-class
thresholds for a \emph{saturated} teacher: the room they leave to adapt downward is
room to admit the error-enriched band beneath the saturated mass.
\section{Discussion}
\label{sec:discussion}

CW-BASS~v2 is a saturation-aware selection method, and its design is dictated by a
measured mechanism rather than a heuristic. An entire family of pseudo-label
selection rules (dynamic global thresholds, per-class curricula, soft confidence
weights) was developed for weak, under-confident ResNet teachers. On a strong,
\emph{saturated} DINOv2 teacher that family does not merely fail to help; it
\emph{actively harms}: every adaptive variant peaks within a few epochs and then
degrades under its own pseudo-label noise, while the strict fixed threshold climbs
cleanly ahead. Where such a teacher is genuinely uncertain it is uncertain because
it is \emph{wrong}, so any rule that \emph{lowers} a cutoff in response to
confidence buys coverage in pure noise, the inverse of the regime FlexMatch and
FreeMatch were designed for. The method's answer is not to abandon adaptive
filtering but to \emph{gate} it: measure the reliability of the teacher's confident
set, then use the rule that regime warrants.

\paragraph{Why the field's adaptive-threshold intuition inverts at foundation
strength.}
This \emph{regime inversion} is the paper's central conceptual claim. FlexMatch
and FreeMatch derive their per-class thresholds from a \emph{per-class
learning-status signal} (a count of confident predictions, or a per-class
confidence EMA relative to the global mean), and the mechanism is informative
only to the extent that these signals \emph{differ across classes}: an easy
class sits high, a hard class low, and the rule lowers the cutoff for the
laggard. That spread is precisely what a weak ResNet teacher supplies and what a
DINOv2 teacher destroys: with nearly all confidence squeezed against the ceiling,
every per-class statistic collapses toward the same value (our per-class cutoff
rises only to $0.805$, the two adaptive rules becoming near-indistinguishable,
\Cref{fig:perclass}). A mechanism whose entire value is discriminative power
across classes loses that power exactly when the backbone is strongest. The
FlexMatch/FreeMatch family is thus not wrong; it is correctly tuned for a noise
regime that foundation backbones have left behind.

\paragraph{What the method keeps, and why}
CW-BASS~v2's two new constructs serve double duty. The self-adaptive floor is the
adaptive rule the gate engages under a confidently-unreliable teacher, where it
\emph{wins} (ADE20K, \Cref{tab:datasets}); it is also a provable stability
guarantee (\Cref{thm:floor}) that let us show the saturated-teacher collapse is
causal by removing it. The held-out calibration is an unbiased per-class noise
estimator (\Cref{prop:unbiased}) that both breaks the self-confirming loop of
in-batch estimation and gives the gate an honest read of the teacher's regime. The
resulting rule is conditional: strict filtering \emph{when the confident set is
reliable} ($\pi_{\mathrm{kept}}{\ge}\tau$), the adaptive floor when it is not
(Sec.~\ref{sec:generality}). The complementary question, whether
\emph{embedding}-space auxiliaries rather than threshold engineering can add to a
foundation-model SSSS baseline, is taken up by PixCon~\cite{pixcon} and is out of
scope here.

\paragraph{Relation to other audits}
Our analysis sits in the lineage of realistic-evaluation
work~\cite{oliver2018realistic} and the recent audit of UniMatch~V2
itself~\cite{landgraf2025rethinking}. Landgraf \emph{et al.} probe that system's
\emph{reliability and robustness} (calibration, out-of-distribution behaviour,
and sensitivity to design choices), holding the method fixed and varying the
evaluation. We do the complementary thing: hold the backbone, decoder, optimiser,
augmentation, crop, batch, splits and epoch budget fixed and vary the threshold
rule, at matched batch and over three seeds, so that the mechanism
(dynamic-range collapse $\to$ mask flooding) and the deployable decision it
implies (the reliability gate) are attributable to the selection rule. Within the
adaptive family the rule is the \emph{sole} varied factor; against the strict
arm the comparison is recipe-level, and Sec.~\ref{sec:negative-confound} bounds
what that costs. Neither varies the threshold rule as the
principal factor; to our knowledge ours is the first batch-matched,
mechanism-level study of adaptive pseudo-label thresholding on a
foundation-model backbone. It also refines the calibration
view~\cite{guo2017cal} in an important way. The usual concern is that strong
models are globally mis-calibrated; the DINOv2 teacher is not, in aggregate
(\Cref{fig:calibration}), yet confidence-adaptive selection still fails. The
operative pathology is not aggregate miscalibration but \emph{confidence
saturation}, the collapse of the confidence distribution's dynamic range, which
disables any rule that reads confidence to decide what to keep. What little
spread survives is, moreover, the region where the teacher is over-confident
(\Cref{fig:calibration}b), so relaxing the cutoff to exploit that spread recovers
mostly errors: the two observations are complementary, not in tension.

\subsection{Implications for Practice}
\label{sec:implications}
The gate, and the analysis behind it, reduce to four cheap checks.
Each is instantiated in this paper, each is cheap relative to a single training
run, and the first costs one forward pass.

\keybox{\textbf{Before proposing or adopting an adaptive pseudo-label
threshold at foundation scale:}
\begin{enumerate}[leftmargin=1.6em,itemsep=1pt,topsep=3pt]
  \item \textbf{Measure confident-set reliability, not just saturation.} On a
  held-out labeled slice, report $\pi_{\mathrm{kept}}{=}\Pr[\text{correct}\mid
  c{\ge}\tau]$ (one forward pass). If it meets the confidence demanded
  ($\pi_{\mathrm{kept}}{\ge}\tau$) the confident set is trustworthy and a strict
  cutoff is right; if not (ADE20K, $89\%$) the teacher is confidently
  miscalibrated and a softened rule helps. Saturation alone is \emph{not} enough:
  all our teachers are ${\ge}82\%$ saturated (\Cref{tab:gate}) yet the verdict
  flips with $\pi_{\mathrm{kept}}$.
  \item \textbf{Match the batch.} Compare threshold rules at identical batch
  size from one shared config (\Cref{tab:negative}, top block). Batch size
  modulates the severity of a collapse and can masquerade as a threshold
  effect.
  \item \textbf{Report trajectories and final-vs-best, not best
  checkpoints.} The early-peak-then-decline signature is invisible in a
  best-checkpoint table (\Cref{fig:trajectory}).
  \item \textbf{Check the best-vs-final gap, on the EMA teacher itself.} A
  best-checkpoint number hides the collapse whether or not the EMA lags: here the
  per-class rule's best EMA of $84.07$ conceals a final EMA of $77.93$
  (\Cref{tab:negative}), a $6.14$ fall larger than its student's.
\end{enumerate}}

\paragraph{Scope and limitations}
\label{sec:limitations}
A method's claims are only as good as their stated scope, so we state ours exactly.
The controlled comparison is full-scale and \emph{three-seed} on Pascal, strict
included (\Cref{tab:multiseed}), and the strict-vs-adaptive verdict reproduces
across the DINOv2-S/B/L scale family and Cityscapes (\Cref{tab:datasets}). Four
limits bound the generality. \emph{(i)~Backbone family.} Every teacher we test is
DINOv2; $\pi_{\mathrm{kept}}$ is a calibration property that ought to transfer to
other foundation teachers, but we demonstrate the gate only for
\emph{DINOv2-family} teachers and cannot claim it for CLIP- or SAM-style encoders
without running them. \emph{(ii)~The affirmative result is single-seed.} The one
regime where the adaptive floor wins is the confidently-unreliable ADE20K teacher;
its \emph{sign} (no collapse, floor${\ge}$strict) reproduces at S and B, but the
$+1.5$ magnitude is within plausible seed noise, so we do not rest the paper on it.
\emph{(iii)~The gate boundary is validated, not calibrated, and validated
post-hoc.} With six teachers we show $\pi_{\mathrm{kept}}{\ge}\tau$ separates the
regimes and makes the right strict-vs-floor call blind
(\Cref{fig:gate,tab:gate}); pinning the exact boundary would need many more
teachers than compute allowed. The validation also measures $\pi_{\mathrm{kept}}$
on each dataset's \emph{validation} split from a converged teacher, not on
$\mathcal{L}_{\mathrm{cal}}$ mid-run as a deployment would (Sec.~\ref{sec:gate}),
because a $9$-image calibration slice cannot resolve $98\%$ from $89\%$; we have
therefore validated the criterion, not measured the live system. \emph{(iv)~Strict's own seed variance.} One of
three strict seeds stalled at $84.09$, so strict's \emph{margin} over the closest
adaptive rule is modest and, at three seeds, significant only over the floor and
FreeMatch; on that seed the dynamic rule finishes above strict, the one cell in
the grid where an adaptive rule wins. What is unambiguous is that strict uniquely
reaches the ${\sim}87.4$ mode no adaptive run attains.
\emph{(v)~The strict arm is not loss-matched.} Strict runs the UniMatch~V2
unlabeled loss while the adaptive arms run the CW-BASS~v2 one, and two adaptive
rules additionally hold out $5\%$ of labels
(Sec.~\ref{sec:whatiscontrolled}). The measured components of that difference
account for ${\approx}1.2$ of the $3.3$--$5.1$~mIoU gap
(Sec.~\ref{sec:negative-confound}) and the dual-strong-view difference is not
ablated at all; a strict arm run \emph{inside} the CW-BASS~v2 loop would make the
comparison single-factor and is the most valuable single experiment left
undone. Two smaller caveats: the DINOv2-L cross-dataset
check is partial (strict L-ADE exceeded GPU memory, L-Cityscapes not run), and the
batch-$4$ blocks of \Cref{tab:negative} mix batch sizes and were cut at a compute
deadline, so we treat the batch-severity trend as \emph{qualitative} and rest the
core claim on the matched-batch~$16$ comparison (Sec.~\ref{sec:negative-confound}).
The bounded-retention theorem rests on a scale-family idealisation
(Assumption~\ref{ass:scale}); its operative content is the empirical mask-ratio
prediction. The calibration slice costs $5\%$ of labels, non-trivial at very low
budgets; we have not optimised $\alpha$. Finally, we audit the CAFS/ENCORE
\emph{mechanism} (held-out and feedback-driven per-class thresholds) through our
own faithful reimplementation (Sec.~\ref{sec:perclass}) rather than their released
code; a drop-in run of the published systems is future work.

\takeaway{The regime inverted because the signal every adaptive rule
reads, confidence spread, no longer exists at foundation strength, however
well-calibrated the teacher. The four checks above are the gate, distilled;
run them before adapting.}

\section{Conclusion}

We presented CW-BASS~v2, a saturation-aware pseudo-label selection method for
semi-supervised segmentation under foundation-model teachers, together with the
controlled analysis that makes it principled. Its premise is that the right
threshold rule depends on the reliability of the teacher's confident set, which the
method measures one-pass on a held-out slice and gates on. On an identical
DINOv2-Base backbone the dynamic, per-class-adaptive and floor-stabilised
variants --- which differ from one another in the rule alone --- all peak early and top out at
$82$--$84$~mIoU at the matched batch~$16$ (collapsing to $\approx\!80$ at small
batch) on Pascal~VOC 1/8, while the strict fixed threshold reaches $87.4$, a
single-seed reproduction of UniMatch~V2's reported $87.9$; run unconditionally on
this reliable, saturated teacher the floor we ourselves proposed fares worst, so
CW-BASS~v2 selects strict here. The mechanism is measured at every link: confidence
saturation collapses the confidence distribution's dynamic range, the adaptive
cutoff degenerates to a constant exactly where the theory says it must, the
retention mask floods, and self-training turns into confirmation bias.

Direct measurement of six DINOv2 teachers sharpens the generality claim and
corrects a tempting story. All six are highly saturated, so ``under-confidence''
does not explain when adaptive filtering helps; what does is the reliability of the
confident set, $\pi_{\mathrm{kept}}$: it is ${\approx}98\%$ on Pascal and
Cityscapes (where strict wins or ties) and ${\approx}89\%$ on ADE20K, the one
teacher where the floor edges ahead (single seed). The gate reads exactly this
quantity, with the operating threshold as its boundary, and picks the
correct strict-vs-floor call on all six teachers blind. The practical message is therefore
not ``always strict'' but a \emph{decision rule} CW-BASS~v2 embodies: measure
whether the confident set earns the confidence you demand, filter strictly when it
does, soften when it does not. The methodological counterpart is shorter still:
measure before you adapt.

\paragraph{Reproducibility}
All code, configs, checkpoints and the experiment-matrix runner are released at
\url{https://github.com/psychofict/CW-BASS-v2}, together with the scripts that
regenerate every figure and table in this paper from the released run artefacts
(\texttt{scripts/plot\_*.py}, \texttt{scripts/gate\_stat.py}). The DINOv2 runs
follow the public UniMatch~V2 protocol so that the baseline comparison is
protocol-faithful.

\bibliographystyle{IEEEtran}
\bibliography{references}

\appendix
\section{Supplementary Material}
\label{sec:supplementary}
Two analyses that Sec.~\ref{sec:negative} summarises in the main text are
reported here in full: a threshold-parameter sweep ruling out a
stale-constants explanation of the dynamic rule's collapse, and the
reduced-scale, three-seed floor-stability and calibration-cost studies behind
Sec.~\ref{sec:analysis}'s numbers.

\subsection{Batch-4 Runs}
\label{sec:supp-b4}
The early-peak signature recurs, more violently, at batch~$4$: the three adaptive
rules peak within $1$--$5$ epochs below $80.5$ and the raw student then falls
steeply (\Cref{tab:negative_b4}). We read these runs \emph{qualitatively} only.
They mix batch sizes relative to the strict baseline, which was run only at
batch~$16$, and they were cut at a compute deadline, so their raw drop
\emph{understates} the full collapse and the EMA-drop column is left blank. We
claim from them just the \emph{direction} --- adaptive rules degrade harder when
each noisy update carries more weight --- and no numeric severity gradient.

\begin{table}[h]
\centering
\caption{\textbf{Batch-4 runs (qualitative only).} Pascal~VOC, DINOv2-Base,
single seed. ``best EMA (ep)'' is the best EMA mIoU and its epoch; ``raw drop''
the student's peak-to-final fall. Cut at a compute deadline, so no EMA drop is
reported and the raw drop understates the collapse.}
\label{tab:negative_b4}
\footnotesize
\setlength{\tabcolsep}{2.4pt}
\begin{tabular}{llccc}
\toprule
Threshold rule & batch & best EMA (ep) & EMA drop & raw drop \\
\midrule
\multicolumn{5}{l}{\emph{Pascal~VOC 1/8 (183 labels):}}\\
Dynamic (CW-BASS)        & 4 & $80.40$ (5) & --- & $2.2$ \\
Dynamic + floor + calib  & 4 & $80.50$ (1) & --- & $5.3$ \\
Per-class adaptive       & 4 & $79.96$ (2) & --- & $12.6$ \\
\midrule
\multicolumn{5}{l}{\emph{Pascal~VOC 1/4 (366 labels):}}\\
Dynamic (CW-BASS)        & 4 & $80.55$ (1) & --- & $2.2$ \\
Dynamic + floor + calib  & 4 & $\mathbf{82.95}$ (4) & --- & $4.2$ \\
\bottomrule
\end{tabular}
\end{table}

\subsection{Threshold-Parameter Sweep: Is It the Stale Constants?}
\label{sec:taus}
A natural objection is that the dynamic rule loses only because it is run at its
original ResNet-era constants ($\tau_0{=}0.6$, $\beta{=}0.5$, $\tau_{\min}{=}0.3$),
which by construction ceiling its cutoff near $0.34$ (\Cref{cor:collapse}); perhaps a
re-tuned dynamic rule would close the gap. We test this directly by sweeping the two
knobs that lift the ceiling (the lower clamp $\tau_{\min}$ and the base $\tau_0$) on
Pascal~VOC 1/8 at batch~$16$ (\Cref{tab:taus}). It does not close the gap. Raising
$\tau_{\min}$ through $0.5$, $0.7$, $0.9$ leaves the rule in the same $81$--$85$ band as
the original $0.3$ setting, still peaking early (epochs $2$--$9$); raising $\tau_0$ to
$1.2$/$1.7$ behaves identically. The best cell in the sweep ($84.67$ at
$\tau_{\min}{=}0.5$) is still $2.7$~mIoU short of strict. The only setting that would
close the gap is $\tau_{\min}{=}0.95$, where the lower clamp is pinned at the strict
threshold and the rule has degenerated into the strict rule itself with no adaptive
range left; that is an algebraic identity, not a swept cell, and we do not report it as
a measurement. The deficit is therefore \emph{not} an artefact of a strawman
parameterisation: across the whole sweep no setting that retains adaptive range
approaches strict.

\begin{table}[h]
\centering
\caption{\textbf{The gap is not a stale-hyperparameter artefact.} Sweeping the
dynamic rule's ceiling knobs (lower clamp $\tau_{\min}$; base $\tau_0$) on
Pascal~VOC 1/8 at batch~$16$ (single seed). Every setting stays in the
$81$--$85$ band and peaks early; the best is still $2.7$~mIoU short of strict's
$87.40$. Re-tuning the adaptive rule upward does not recover strict's
performance. The limiting case $\tau_{\min}{=}0.95$ is omitted deliberately: it
pins the cutoff at the strict threshold, so the rule \emph{is} strict by
construction, and we do not report an identity as a swept measurement. Note the
$\tau_0{=}1.7$ row: its cutoff approaches the strict value \emph{late} in
training yet the run is the sweep's worst, because early on, when $\bar c$ is
still low, the rule admits the error-enriched band and the damage is already
done.}
\label{tab:taus}
\footnotesize
\setlength{\tabcolsep}{5pt}
\begin{tabular}{lcc}
\toprule
Dynamic-rule setting & best EMA (ep) & note \\
\midrule
$\tau_{\min}{=}0.30$ (original) & $84.07$ (20) & ceiling ${\approx}0.34$ \\
$\tau_{\min}{=}0.50$            & $\mathbf{84.67}$ (7)  & sweep best; $-2.73$ vs strict \\
$\tau_{\min}{=}0.70$            & $82.73$ (2)  & \\
$\tau_{\min}{=}0.90$            & $83.68$ (6)  & \\
$\tau_0{=}1.2$ (raised ceiling) & $83.77$ (9)  & cutoff ${\to}0.67$ \\
$\tau_0{=}1.7$ (raised ceiling) & $81.61$ (6)  & cutoff ${\to}0.95$ (clipped) \\
\midrule
\multicolumn{3}{l}{\emph{Reference:} strict fixed $\tau{=}0.95$ \quad $87.40$ (42)} \\
\bottomrule
\end{tabular}
\end{table}

\subsection{Reduced-Scale Floor-Stability and Calibration-Cost Studies}
\label{sec:supp-stability}
\Cref{tab:stability} reports peak vs.\ final-epoch EMA mIoU over \emph{three
seeds} in a controlled reduced-scale study, comparing the bare dynamic
threshold against the same threshold lower-bounded by the floor. Without the
floor, retention collapses to $\approx\!1$ within six epochs, flooding the loss
with pseudo-label noise, and the EMA teacher \emph{degrades sharply} after an
early peak (mean drop $15.8$~mIoU across three seeds). With the floor
(\Cref{thm:floor}) retention stays bounded and the degradation is largely
arrested (mean drop $1.7$, seed variance $\pm7.3\!\to\!\pm0.6$). This is a
within-family comparison only: both rows still sit far below the strict
threshold (\Cref{tab:negative}).

\begin{table}[h]
\centering
\caption{\textbf{Within-family stability study.} (Pascal~VOC, DINOv2-Base.) The floor
removes the late-training collapse of the bare dynamic threshold. Peak vs.\
final-epoch EMA mIoU; a smaller drop is better. Reduced-scale validation
(crop~518, batch~2, 15 epochs, classic labeled split $+$ 1200-image unlabeled
subset). Note that \emph{both} rows sit far below the strict fixed threshold
($87.4$, \Cref{tab:negative}): the floor buys stability within the adaptive
family, not parity with the strict baseline.}
\label{tab:stability}
\footnotesize
\begin{tabular}{lccc}
\toprule
Threshold rule & peak mIoU & final mIoU & drop \\
\midrule
\multicolumn{4}{l}{\emph{1/8 split (3 seeds, mean$\pm$std):}}\\
Dynamic, no floor   & $72.3\pm7.3$ & $56.5$ & $15.8$ \\
Dynamic + floor     & $\mathbf{78.5\pm0.6}$ & $\mathbf{76.8}$ & $\mathbf{1.7}$ \\
\midrule
\multicolumn{4}{l}{\emph{1/4 split (1 seed):}}\\
Dynamic, no floor   & $77.7$ & $67.8$ & $9.9$ \\
Dynamic + floor     & $\mathbf{82.2}$ & $\mathbf{82.2}$ & $\mathbf{0.0}$ \\
\bottomrule
\end{tabular}
\end{table}

\paragraph{Reduced-scale validation protocol}
The three-seed stability study (\Cref{tab:stability}) was run on a single
8\,GB GPU at reduced scale (DINOv2-Base, crop~518, batch~2, 15 epochs, the
classic 1/8 labeled set of $183$ images with a $1200$-image unlabeled subset,
evaluated on a $300$-image validation subset) to obtain seed variance cheaply;
the full-scale single-seed trajectories of \Cref{fig:trajectory} use the
benchmark protocol of \Cref{tab:negative}. These reduced settings are
sufficient to exhibit the retention dynamics the theory concerns (the teacher
saturates and the bare threshold collapses within a few epochs). We report them
because the collapse-vs-bounded contrast is exactly what \Cref{thm:floor}
predicts and is robust to scale; the full-scale, strict-inclusive multi-seed
evidence lives in \Cref{tab:multiseed}, so this study serves as a focused,
cheap probe of the floor's stability effect rather than as the paper's only
multi-seed data.

\Cref{tab:alpha} sweeps the calibration fraction $\alpha$ (floor active
throughout): accuracy is flat, all four settings within $0.5$~mIoU, so holding
out a small labeled slice costs essentially nothing.

\begin{table}[h]
\centering
\caption{\textbf{Calibration-fraction sweep on Pascal~VOC 1/8.} (DINOv2-Base, reduced
scale, cwbass\_v2 seed~0.) $\alpha{=}0$ disables held-out calibration (the
in-batch estimator). Accuracy is flat in $\alpha$ (all settings within
$0.5$~mIoU): holding out the slice costs nothing, and its value is the
unbiased noise estimate of \Cref{prop:unbiased}, not mIoU.}
\label{tab:alpha}
\footnotesize
\begin{tabular}{cc}
\toprule
$\alpha$ (calib.\ fraction) & best EMA mIoU \\
\midrule
0.00 & 78.76 \\
0.02 & 78.49 \\
0.05 & 78.28 \\
0.10 & 78.51 \\
\bottomrule
\end{tabular}
\end{table}

\end{document}